\definecolor{mydarkblue}{rgb}{0,0.08,0.45}
\newcommand{\eat}[1]{}
\newtheorem{theorem}{Theorem}[section]
\newtheorem{lemma}[theorem]{Lemma}
\newtheorem{definition}[theorem]{Definition}
\newtheorem{corollary}[theorem]{Corollary}
\newcommand{\R}{\mathbb{R}}
\newcommand{\X}{\mathcal{X}}
\newcommand{\mC}{\mathcal{C}}
\newcommand{\mS}{\mathcal{S}}
\newcommand{\mG}{\mathcal{G}}
\newcommand{\prog}{improvement \xspace}
\newcommand{\Prog}{Improvement \xspace}
\renewcommand{\d}{\mathrm{d}}
\newcommand{\x}{\mathbf{x}}
\newcommand{\s}{\mathbf{S}}
\newcommand{\logf}[2]{\ensuremath{\log\left(\frac{#1}{#2}\right)}}
\newcommand{\rgta}{\rightarrow}
\newcommand{\zo}{\ensuremath{\{0,1\}}}
\newcommand{\KL}[2]{D\left( #1 \| #2 \right)}
\newcommand{\infnorm}[1]{\left\lVert#1\right\rVert_\infty}
\newcommand{\lt}{\left}
\newcommand{\rt}{\right}
\newcommand{\macc}{multi-accuracy\xspace}
\newcommand{\mact}{multi-accurate\xspace}
\newcommand{\mcbn}{approximate multi-calibration\xspace}
\newcommand{\mcbd}{approximately multi-calibrated\xspace}
\newcommand{\smcbn}{multi-calibration\xspace}
\newcommand{\maxent}{MaxEnt\xspace}
\renewcommand{\varepsilon}{\epsilon}
\renewcommand{\d}{\mathrm{d}}
\newcommand{\abs}[1]{\ensuremath \Bigl\lvert #1 \Bigr\rvert}
\DeclareMathOperator{\poly}{poly}
\DeclareMathOperator*{\E}{\mathbb{E}}
\newcommand{\MC}{MC\xspace}
\newcommand{\ME}{LL-KLIEP\xspace}
\newcommand{\KLIEP}{KLIEP\xspace}
\newcommand{\ulsif}{uLSIF\xspace}
\newcounter{this-list}
\begin{document}
\newcommand{\theTitle}{KL Divergence Estimation with Multi-group Attribution}

\author{Parikshit Gopalan\footnote{Email: \texttt{pgopalan@vmware.com}}\\
VMware Research
\and Nina Narodytska\footnote{Email: \texttt{nnarodytska@vmware.com }}\\
VMware Research
\and Omer Reingold\footnote{Most of the work performed while visiting VMware Research. Research supported in part by
NSF Award IIS-1908774. Email: \texttt{reingold@stanford.edu}}\\
Stanford University
\and Vatsal Sharan\footnote{Part of the work performed while at MIT. Email: \texttt{vsharan@usc.edu}}\\
USC
\and Udi Wieder\footnote{Email: \texttt{uwieder@vmware.com}}\\
VMware Research
}

\title{\theTitle}
\date{}

\clearpage
\maketitle
\begin{abstract}

Estimating the Kullback-Leibler (KL) divergence between two distributions given samples from them is well-studied in machine learning and information theory. Motivated by considerations of multi-group fairness, we seek KL divergence estimates that accurately reflect the contributions of sub-populations to the overall divergence. We model the sub-populations coming from a rich (possibly infinite) family $\mathcal{C}$ of overlapping subsets of the domain. We propose the notion of multi-group attribution for $\mathcal{C}$, which requires that the estimated divergence conditioned on every sub-population in $\mathcal{C}$ satisfies some natural accuracy and fairness desiderata, such as ensuring that sub-populations where the model predicts significant divergence do diverge significantly in the two distributions. Our main technical contribution is to show that multi-group attribution can be derived from the recently introduced notion of multi-calibration for importance weights \cite{hkrr2018,gopalan2021multicalibrated}. We provide experimental evidence to support our theoretical results, and show that multi-group attribution provides better KL divergence estimates when conditioned on sub-populations than other popular algorithms.

\end{abstract}

\thispagestyle{empty}
\newpage
\tableofcontents
\thispagestyle{empty}
\newpage
\setcounter{page}{1}

\section{Introduction}
\label{sec:intro}

The problem of comparing and contrasting distributions is central to machine learning.
\eat{As a demonstrative example consider a researcher that wishes to advocate for a social aid program and needs to identify the set of recipients in the population who would benefit the most from the aid. The data contains rich demographic and economic information as features that could be used for this task. The researcher considers two different ways to measure poverty\footnote{How to measure poverty is a lively debate in economic policy}. The researcher models the two measures as distributions $P$ and $R$ on the domain $\X$ of features and then measures the divergence between them.}
As a illustrative example, consider a medical researcher who has patient data from a prior outbreak ($P$) of a disease and a more recent one ($R$). The data contains medical and demographic information as features. The researcher wishes to identify patterns of shifts between the epidemiological behavior of the two outbreaks. The researcher models the two outbreaks as distributions $P$ and $R$ on a domain $\X$ of features and then measures the divergence between them.
There are several divergences studied in the literature, of which the Kullback-Leibler (KL) divergence is arguably the most important \cite{CTbook}. It is defined as
\begin{align}
    \label{eq:def-kl}
    \KL{R}{P} = \E_{\x \in R}\lt[ \logf{R(\x)}{P(\x)} \rt].
\end{align} 
The definition shows that measuring the divergence does not require  models for both $P$ and $R$, rather it suffices to have a model $w$ for the ratio $w^*(x) = R(x)/P(x)$, as the divergence is just the expectation of $\log w^*$ under $R$. The function $w$ is referred to as the importance weights of our model or the density ratio \cite{sugiyamaBook}. Indeed, while in general $w^*$ cannot be learned exactly, there are numerous algorithms in the literature that learn models $w$ for $w^*$ satsfying certain desiderata \cite{KLIEP, nguyen2010estimating, dudik2007maximum}.

The importance weights $w$ define a distribution $Q$ where $Q(x) = w(x)P(x)$ which is our model for $R$ and can be used to estimate the divergence.  Say the researcher finds that the divergence is large, and concludes that the two outbreaks differ significantly. Going further, the researcher would want to use the model $Q$ to determine how various \emph{sub-groups} of the population contribute to the divergence. 
Concretely, let $C$ be some sub-population of interest, say, people of a certain age bracket. Let $Q|_C$ (respectively $P|_C,R|_C$) be the distribution conditioned on $x \in C$. Does the model $Q|_C$ give insight into the \emph{conditional} divergence between $R|_C$ and $P|_C$? If $\KL{Q|_C}{P|_C}$ is found to be large, what can the researcher infer about $\KL{R|_C}{P|_C}$?

This is important not just for the utility of the researcher, it is also motivated by the desire for the set $C$ to be treated fairly by the model $Q$. Realistically, there could be many (possible even infinite) sub-groups of interest, and they might overlap. We model them as coming from a family $\mathcal C$ of subsets of the domain. The fundamental question that we ask, is given  $\mathcal C$, and two distributions $R$ and $P$, what does it mean to learn a model $Q$ so that the estimated divergence is attributed `fairly' to the sets in $\mathcal C$?

\subsection{Multi-group attribution}

Given sample access to two distributions $R$ and $P$ over a domain $\X$, our goal is to estimate $\KL{R}{P}$. Let $\mC = \{C: \X \rgta \zo\}$ denotes a family of subsets that contains the sub-populations we are interested in.\footnote{Assume for simplicity that $\mC$ is closed under complement.}  We want to be able to meaningfully attribute portions of our estimate of KL divergence to various subset $C \in \mC$. To formalize this, we introduce some notation. 

Given $w: \X \rgta \R^+$ where $\E_P[w(x)] =1$, let $Q$ be the distribution defined by $Q(x) = w(x)P(x)$, we denote this by $Q = w \cdot P$. For every $C \subset \X$, let $Q(C) = \Pr_Q[\x \in C]$. When $Q(C) > 0$, let $Q|_C = Q(x)/Q(C)$ for $x \in C$ denote the conditional distribution of $Q$ over $C$. We let $w^*(x) = R(x)/P(x)$ denote the ground truth importance weights. Denote the KL divergence between Bernoulli variables with parameters $p$ and $q$ by
\begin{align}
    \label{eq:bkl}
    d(p,q) = p\logf{p}{q} + (1- p)\logf{1-p}{1-q}.
\end{align} 

\paragraph{Defining attribution.} How much of the total KL divergence should one attribute to a sub-population $C \in \mC$, where $\mC$ is a large, overlapping collection of sub-populations? Our starting point is the following decomposition of the divergence, which follows from the chain rule \cite{CTbook}:

\begin{lemma}
\label{lem:chain-rule}
Let $C \subseteq \X$ and let $\bar{C} = \X \setminus C$. Then 
\begin{gather} 
\KL{R}{P} = d(R(C), P(C)) + R(C)\KL{R|_C}{P|_C}\notag + R(\bar{C})\KL{R|_{\bar{C}}}{P|_{\bar{C}}}.\notag
\end{gather}
\end{lemma}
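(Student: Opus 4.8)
The plan is to expand $\KL{R}{P}$ as a sum over the domain and split it according to the partition of $\X$ into $C$ and $\bar C$. Writing $\KL{R}{P} = \sum_{\x} R(\x)\logf{R(\x)}{P(\x)}$, the key observation is that for $\x \in C$ the density ratio factors through the conditional distributions: since $R(\x) = R(C)\cdot R|_C(\x)$ and $P(\x) = P(C)\cdot P|_C(\x)$, we have $R(\x)/P(\x) = (R(C)/P(C))\cdot(R|_C(\x)/P|_C(\x))$, and the logarithm turns this product into a sum.

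First I would restrict attention to the sum over $\x \in C$ and substitute this factorization. The logarithm splits into a constant term $\logf{R(C)}{P(C)}$ and the term $\logf{R|_C(\x)}{P|_C(\x)}$. Pulling the constant out of the sum and using $\sum_{\x \in C} R(\x) = R(C)$ produces $R(C)\logf{R(C)}{P(C)}$, while the remaining sum, after substituting $R(\x) = R(C)\, R|_C(\x)$ once more, collapses to $R(C)\KL{R|_C}{P|_C}$.

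Next I would carry out the identical computation over $\bar C$, obtaining $R(\bar C)\logf{R(\bar C)}{P(\bar C)} + R(\bar C)\KL{R|_{\bar C}}{P|_{\bar C}}$. Summing the two pieces, the two conditional-divergence terms are exactly the last two summands in the claim, so it remains only to recognize that the two logarithmic terms combine into $d(R(C), P(C))$: since $R(\bar C) = 1 - R(C)$ and $P(\bar C) = 1 - P(C)$, the expression $R(C)\logf{R(C)}{P(C)} + R(\bar C)\logf{R(\bar C)}{P(\bar C)}$ matches the definition \eqref{eq:bkl} of the binary KL divergence verbatim.

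The computation is essentially mechanical, so there is no deep obstacle; the only point requiring care is the handling of degenerate sets. If $R(C) = 0$ then $R|_C$ is undefined, but the corresponding term drops out under the standard convention $0\log 0 = 0$ (and symmetrically for $\bar C$), so the identity continues to hold at the boundary. I would state this convention explicitly to keep the factorization step valid throughout.
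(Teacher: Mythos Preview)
Your proposal is correct and follows essentially the same approach as the paper's proof: both split the expectation defining $\KL{R}{P}$ over $C$ and $\bar C$, factor $R(\x)/P(\x)$ through the conditional distributions, and then recognize the resulting constant terms as $d(R(C),P(C))$. Your explicit handling of the degenerate case $R(C)=0$ is a small addition the paper omits.
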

\begin{enumerate}
\item The {\em marginal term} $d(R(C), P(C))$  accounts for differences in the measure of $C$ under $R$ and $P$.
\item We call $R(C)\KL{R|_C}{P|_C}$ the {\em conditional contribution from $C$} as it is attributable to differences between $R$ and $P$ conditioned on $C$. Similarly $R(\bar{C})\KL{R|_{\bar{C}}}{P|_{\bar{C}}}$ is
 the {\em conditional contribution from $C$}. 
\end{enumerate}

Clearly the first two terms in this decomposition are attributable to $C$. This motivates our definition of {\em ideal} attribution, which lets us estimate these terms.

\begin{definition}
\label{def:id-attr}
The distribution $Q$ satisfies ideal multi-group attribution for $(P, R, \mC)$ if for every $C \in \mC$, 
\begin{align}
    Q(C) &= R(C), \label{eq:att-1}\\
    \KL{Q|_C}{P|_C} &= \KL{R|_C}{P|_C}.\label{eq:att-2} 
\end{align} 
\end{definition}

The first condition is known as multiaccuracy \cite{gopalan2021multicalibrated}, and there are many known algorithms that achieve it. It suffices to correctly estimate the marginal term.
Having Equation \eqref{eq:att-2} as well would let us estimate the conditional contribution from $C$. Unfortunately finding $Q$  satisfying this equation (or even a reasonable approximation) is known to be information-theoretically prohibitive \cite{BatuFRSW13, Valiant11} (see Lemma \ref{lem:impossibility} which shows that any reasonable approximation of the KL-divergence requires a sample complexity polynomial in the size of the domain, which could be exponential in the dimensionality of the data). Instead we propose the following relaxation.

\begin{definition}
\label{def:attr}
The distribution $Q$ satisfies (exact) multi-group attribution for $(P, R, \mC)$ if for every $C \in \mC$, 
\begin{gather}
    Q(C) = R(C),\label{eq:att1}\\
    \KL{R|_C}{Q|_C}  + \KL{Q|_C}{P|_C} 
    = \KL{R|_C}{P|_C}.\label{eq:att2}
\end{gather}
\end{definition}

We call the second condition the Pythagorean property, in keeping with the literature \cite{CTbook}. Comparing it to Equation \eqref{eq:att-2}, there is an additional $\KL{R|_C}{Q|_C}$ term on the left. The Pythagorean property relaxes the condition that $R|_C$ and $Q|_C$ must be close (which is impossible to achieve efficiently), to asking that $Q|_C$ lies {\em in between} $R|_C$ and $P|_C$. Why is this relaxation meaningful for attribution?

\paragraph{Soundness and Improvement.} The Pythagorean property implies the following inequalities for the model $Q$ that we will refer to as {\em soundness} and {\em \prog} respectively:
\begin{align} 
\KL{R|_C}{P|_C} \geq \KL{Q|_C}{P|_C},\label{eq:att2.1}\\
\KL{R|_C}{P|_C} \geq \KL{R|_C}{Q|_C}.\label{eq:att2.2}
\end{align}
Let us explain why these are desirable conditions. 
Soundness and multiaccuracy together imply that 
\[ R(C)\KL{R|_C}{P|_C} \geq Q(C)\KL{Q|_C}{P|_C} \]
hence our model $Q$ is conservative in estimating the conditional contribution of $C$. This endows the model $Q$ with the following {\em soundness} guarantee: if it attributes large divergence to any set $C \in \mC$, the ground truth conditional contribution of $C$ is only larger. 

To see why \prog is desirable for attribution, assume it is violated for some $C \in \mC$, where $\KL{R|_C}{P|_C} < \KL{R|_C}{Q|_C}$. Intuitively, $Q$ is meant to be a reweighting of $P$ which is closer to $R$. But conditioned on $C$, it is farther from $R$ than $P$ was. Given this, it is unclear that the divergence it attributes to $C$ is meaningful. 
\Prog guarantees that $Q$ {\em simultaneously improves} on $P$ as a model for $R$ conditioned on every $C \in \mC$; which is desirable from a fairness perspective.

\subsection{Another view of the Pythagorean property}

We show how the Pythagorean property arises naturally from requiring that two estimators for the conditional KL divergence be equal. Say we have a model $Q = w \cdot P$ for $R$ where $w: \X \rgta \R^+$ are the importance weights. We wish to use it to estimate $\KL{R|_C}{P|_C}$ for some $C \in \mC$. 

A natural estimator uses $Q|_C$ in place of $R|_C$:
\begin{align}
    \KL{Q|_C}{P|_C} &= \E_{Q|_C}\left[\logf{Q(x)P(C)}{Q(C)P(x)}\right]\notag \\
     &= \E_{Q|_C}[\log(w(x)] +\logf{P(C)}{Q(C)}. \label{eq:est1}
\end{align}
This estimator is always positive by the positivity of KL divergence, but  it may violate soundness (Equation \eqref{eq:att2.1}).

A second estimator is obtained by using $w(x)$ (suitably renormalized) in place of the true weights $w^*(x)$ and appears in \cite{sugiyamaBook, KLIEP, nguyen2010estimating}:
\begin{align}
    \E_{R|_C}\left[\logf{w(x)P(C)}{Q(C)}\right] = \E_{R|_C}[\log(w(x))] + \logf{P(C)}{Q(C)}.\label{eq:est2}
\end{align}
The only difference from the previous estimator is that we compute the expectation over $R|_C$. But this estimator has rather different guarantees: we can rewrite it as
\begin{align}
    \E_{R|_C}\left[\logf{w(x)P(C)}{Q(C)}\right] &= 
    \E_{R|_C}\left[\logf{Q|_C(x)}{P|_C(x)}\right]\notag\\
   &= \E_{R|_C}\left[\logf{R|_C(x)}{P|_C(x)} - \logf{R|_C(x)}{Q|_C(x)} \right]\notag\\
    &= \KL{R|_C}{P|_C} - \KL{R|_C}{Q|_C}. \label{eq:est2.1}
\end{align} 
Since $\KL{R|_C}{Q|_C} \geq 0$, this estimator always guarantees soundness. However, unlike the previous estimator, this estimator might be negative. \Prog (Equation \eqref{eq:att2.2} captures the condition that it is non-negative. 

If the two estimators are equal, we get an estimate which is non-negative and a lower bound, and where the model $Q$ satisfies both soundness and \prog. This equivalence is characterized by the Pythagorean property.

\begin{lemma}
\label{lem:eq-est}
The following are equivalent:
\begin{enumerate}
\item The two estimators are equal. 
\item The Pythagorean property holds conditioned on $C$: $\KL{Q|_C}{P|_C} = \KL{R|_C}{P|_C} - \KL{R|_C}{Q|_C}$. 
\item $\E_{R|_C}[\log(w(x))] = \E_{Q|_C}[\log(w(x))]$. 
\end{enumerate}
\end{lemma}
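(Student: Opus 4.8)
The plan is to observe that both estimators have already been rewritten in two useful forms in Equations \eqref{eq:est1}, \eqref{eq:est2}, and \eqref{eq:est2.1}, so the lemma reduces to matching those forms against the three claimed conditions. Concretely, the first estimator equals $\KL{Q|_C}{P|_C}$ by its definition in \eqref{eq:est1}, while the second estimator equals $\KL{R|_C}{P|_C} - \KL{R|_C}{Q|_C}$ by the computation carried out in \eqref{eq:est2.1}. Hence the assertion that the two estimators are equal (condition 1) is literally the identity $\KL{Q|_C}{P|_C} = \KL{R|_C}{P|_C} - \KL{R|_C}{Q|_C}$, which is the Pythagorean property (condition 2). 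This yields the equivalence of conditions 1 and 2 with no further computation.

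For the equivalence of conditions 1 and 3, I would instead use the complementary form of each estimator. From \eqref{eq:est1} the first estimator equals $\E_{Q|_C}[\log(w(x))] + \logf{P(C)}{Q(C)}$, and from \eqref{eq:est2} the second equals $\E_{R|_C}[\log(w(x))] + \logf{P(C)}{Q(C)}$. The marginal terms $\logf{P(C)}{Q(C)}$ are identical in both, so after cancelling them the two estimators agree if and only if $\E_{Q|_C}[\log(w(x))] = \E_{R|_C}[\log(w(x))]$, which is precisely condition 3. Chaining this with the previous step shows that all three conditions are mutually equivalent.

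The only care needed is to confirm that the manipulations are well-defined: I would note at the outset that we work on the event $Q(C) > 0$ (so that $Q|_C$ and the density ratios appearing inside the logarithms are defined), that $w$ is strictly positive so $\log w(x)$ makes sense, and that the relevant expectations are finite so that cancelling the shared $\logf{P(C)}{Q(C)}$ term is legitimate. There is no substantive obstacle here: the content of the lemma is carried entirely by the two algebraic rewritings of the estimators performed just before the statement, and the proof amounts to reading those identities against the three desiderata.
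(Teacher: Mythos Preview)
Your proposal is correct and follows exactly the paper's approach: it uses \eqref{eq:est1} and \eqref{eq:est2.1} to identify the two estimators with the LHS and RHS of condition (2), respectively, giving $(1)\Leftrightarrow(2)$, and then equates the right-hand sides of \eqref{eq:est1} and \eqref{eq:est2} and cancels the common $\logf{P(C)}{Q(C)}$ term to obtain $(1)\Leftrightarrow(3)$. The only addition is your paragraph on well-definedness, which the paper omits.
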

\begin{proof}[Proof of Lemma \ref{lem:eq-est}]
The first estimator equals the LHS of (2) by definition (Equation \eqref{eq:est1}),  whereas the second equals the RHS of (2) by Equation \eqref{eq:est2.1}. The equivalence of (1) and (3) follows by equating the RHS of Equations \eqref{eq:est1} and \eqref{eq:est2}.
\end{proof}

\eat{\paragraph{Paper Organization}
In Section (2) we define formally the notion of 'attributable KL estimation' and set-up the problem.  }

\eat{
Say the researcher computes a model $w$ and finds that $\E_R[\log w]$ is large, the researcher concludes that the two methods to measure poverty differ significantly. Now it is clearly important to \emph{attribute} the divergence found correctly. In other words, let $C$ be some sub-population of interest, let $R_C$ (respectably $P_C$) be the distribution conditioned on $x \in C$. Can the model $w$ say something meaningful about the \emph{conditional} divergence $\KL{R_C}{P_C}$? 
We note that the number of of such sets could very large.}
\section{Overview of our results}
\label{sec:overview}

\subsection{Efficient multi-group attribution}

We address the question of whether it is possible to achieve multi-group attribution efficiently, in terms of samples and computation. Since the current definition which requires exact equalities is not possible to achieve with finitely many samples from $R$  and $P$,\footnote{Even checking that $R(C)$ and $P(C)$ are exactly equal is hard given samples.} we define the notion of $(\alpha, \beta)$ multi-group attribution which allows for some slack in both equations. Our main technical contribution is to show that multi-group attribution can be derived from the multi-calibrated partitions introduced by  \cite{gopalan2021multicalibrated}. Informally, this is a partition of the domain $\X$ into regions $S_1, \ldots, S_m$ where the condition
$R|_{S_i}(C) = P|_{S_i}(C)$ holds for every $i \in [m]$ and $C \in \mC$. This means that conditioned on the partition, no $C \in \mC$ can distinguish between $R$ and $P$. Formally, our partitions satisfy a relaxed notion of this condition, which allows them to be computed efficiently. Given such a partition, our main result, Theorem \ref{thm:att-mcab} shows that the distribution $Q = w\cdot P$ where $w(S_i) =R(S_i)/P(S_i)$ for all $x \in S_i$ satisfies multi-group attribution for $\mC$. Combined with the algorithm from \cite{gopalan2021multicalibrated}, we derive an efficient algorithm for multi-group attribution, assuming that $\mC$ has a weak agnostic learner \cite{SBD2, hkrr2018, gopalan2021multicalibrated}. 

We complement this result with a strong negative result showing that a number of well-known algorithms in the literature viz. Log-linear KLIEP \cite{KLIEP, sugiyamaBook}, the Gibbs distribution based estimator from  \cite{nguyen2007nonparametric, nguyen2010estimating} and the MaxEnt algorithm \cite{jaynes1957information, della1997inducing, KazamaT, dudik2004performance, dudik2007maximum} produce distributions $Q$ that do not satisfy multi-group attribution. While they guarantee multiaccuracy, they do not guarantee either soundness or improvement.

\subsection{Sandwiching bounds from multi-group attribution}

The work of \cite{gopalan2021multicalibrated} introduced sandwiching bounds as a multi-group fairness/accuracy desideratum for importance weights. The setting is that the learnt importance weights $w$ are meant to approximate the true importance weights $w^*$ of the distribution $R$ relative to $P$. In analogy with completeness and soundness for proof systems, they ask that for every $C \in \mC$, the importance weights satisfy
\begin{align}
\label{eq:grsw-sandwich}
\frac{R(C)}{P(C)} \leq \E_{R|_C}[w(x)] \leq \E_{R|_C}[w^*(x)].
\end{align}

We show that multi-group attribution implies similar sandwiching bounds for $\log(w)$. Formally, if $Q = w \cdot P$ satisfies Definition \ref{def:attr}, then 
\begin{align}
\label{eq:our-sandwich}
\logf{R(C)}{P(C)} \leq \E_{R|_C}[\log(w(x))] \leq \E_{R|_C}[\log(w^*(x))].
\end{align}

Equations \eqref{eq:grsw-sandwich} and \eqref{eq:our-sandwich} are similar in form, yet neither of them implies the other, and indeed there are some subtle differences. We show that the upper bound in Equation \eqref{eq:our-sandwich} only requires multi-accuracy (as opposed to full multi-group attribution). In contrast, \cite{gopalan2021multicalibrated} showed that neither direction of Equation \eqref{eq:grsw-sandwich} is implied by multiaccuracy alone.

\subsection{Experiments}
We  test  the performance of various algorithms for estimating KL divergence between distributions and the divergence when conditioned on various sub-populations, for mixtures of Gaussians and MNIST-based data. Our results are in line with what the theory predicts, even for such simple data sets: algorithms that guarantee multi-group attribution are indeed better at estimating the divergence per sub-population.

\subsection{Summary of our contributions}
We summarize our contributions below:
\begin{enumerate}
    \item We motivate the problem of fair and accurate multi-group attribution in KL divergence estimation. We present a formal definition of multi-group attribution for a class $\mC$ of sub-populations in Section \ref{sec:defs}.
    \item We show that multi-group attribution can be achieved from the multi-calibrated partitions of \cite{gopalan2021multicalibrated} in Section \ref{sec:mcab}. This implies efficient algorithm for the task of multi-group attribution, under the assumption that the class $\mC$ is weakly agnostically learnable.
    \item In contrast, in Section \ref{sec:macc} we show that a number of popular algorithms in the literature do not satisfy multi-group attribution by constructing counterexamples.
    \item  We validate our theoretical claims with experiments showing that algorithms that guarantee multi-group attribution are indeed better at estimating the divergence per sub-population in Section \ref{sec:experiments}.
\end{enumerate}
We discuss additional related work in Section \ref{sec:related}. Some additional proofs and lemmas are in Appendix~\ref{app:mcab}. 

\section{Notation and Definitions}
\label{sec:defs}

\paragraph{Notation.} All distributions are over a domain $\X$. We work with discrete domains for simplicity, but all results can be generalized to the continuous setting. Given two distribution $P, R$ on domain $\X$, our goal is to estimate the KL divergence between them. 
We use $\mC = {\{C: \X \rgta \zo\}}$ to denote a set of sub-populations for which we want attribution guarantees. We use the notation $Q = w \cdot P$ to denote that a distribution $Q$ on $\X$ has importance weights $w(x) = Q(x)/P(x)$. We say the weights are explicit, if the function $w:\X \rgta \R$ is computable efficiently; this does not necessarily require the pdf of $Q$ to be explicit. For importance weights $w$, let 
\[ \infnorm{w} = \max_{x \in \X}(w(x), 1/w(x)).\] 
For $Q =w\cdot P$, we have $\KL{Q}{P} \in [0, \log(\infnorm{w})$. Let $Q(C) = \Pr_{Q}[\x \in C]$, and let $Q|_C$ denote $Q$ conditioned on $C$. We let  $R = w^*\cdot P$.  

\paragraph{Multi-group attribution.} We present a relaxation of Definition \ref{def:attr} that allows slack in the equalities, which is inherent as we only get sample access to the distributions.

\begin{definition}
\label{def:ab-attr}
Let $\alpha, \beta \geq 0$. The distribution $Q$ satisfies $(\alpha, \beta)$  multi-group attribution for $(P, R, \mC)$ if for all sets $C \in \mC$, 
\begin{gather}
    \abs{Q(C) - R(C)} \leq \alpha, \label{eq:ab-att1}\\
    \abs{\KL{R|_C}{Q|_C}  + \KL{Q|_C}{P|_C} - \KL{R|_C}{P|_C}} \leq \frac{\beta}{R(C)}. \label{eq:ab-att2}
\end{gather}
\end{definition}
We refer to these conditions as approximate multiaccuracy and the approximate Pythagorean property respectively. 
In the RHS of Equation \eqref{eq:ab-att2}, we normalize by $R(C)$. This means that Pythagorean property is meaningful only when $R(C)$ is not too small. This is inevitable in the sample access setting, where we cannot get meaningful bounds for very small sets in the Pythagorean property (in contrast to multiaccuracy). To see why, suppose we take $O(1/\alpha^2)$ samples from $R$ and $P$, and don't see any samples lying in $C$. We can be confident that $R(C), P(C) \leq \alpha$, hence approximate mutliaccuracy holds. However, we cannot say anything about  $\KL{R|_C}{P|_C}$, which could be anywhere in $[0, \log(\infnorm{w^*}))$.   

The reason we use two approximation parameters is that they are of different scale. Being the difference of probabilities, $\alpha \in [0,1]$.
It can be shown  $\beta \in [0, 2\log(\infnorm{w^*})$ for models $Q$ where $\infnorm{w} \leq \infnorm{w^*}$. One can think of $\alpha, \beta$ as parameters that control the sample complexity. To achieve smaller values of $\alpha, \beta$ we need more samples, but this lets us get stronger guarantees, and reason about smaller sets $C$. Our bounds are meaningful for $C$ when $R(C) \geq \alpha$.

The approximate Pythagorean property \eqref{eq:ab-att2} implies approximate versions of soundness and improvement:
\begin{align} 
\KL{R|_C}{P|_C} \geq \KL{Q|_C}{P|_C} - \frac{\beta}{R(C)},\label{eq:ab-att2.1}\\
\KL{R|_C}{P|_C} \geq \KL{R|_C}{Q|_C} - \frac{\beta}{R(C)}.\label{eq:ab-att2.2}
\end{align}

\paragraph{Multiaccuracy.} If the distribution $Q$ satisfies Equation \eqref{eq:ab-att1}, we say it is $\alpha$-multiaccurate for $(R, \mC)$. The set of all $\alpha$-multiaccurate distributions forms a convex polytope that we denote by $K^\alpha(R, \mC)$.

\paragraph{Partitions and multicalibration.}
The following use of partitions to define importance weights is from \cite{gopalan2021multicalibrated}. 

\begin{definition}
A partition $\mS = \{S_1, \ldots, S_m\}$ of $\X$ is a collection of disjoint sets whose union is $\X$. Given  distributions $R, P$, the $(R, \mS)$-reweighting of $P$ is the distribution $Q = w\cdot P$ whose importance weights are $w(x) = w(S_i) = R(S_i)/P(S_i)$ for $ x\in S_i$. 
\end{definition}
The above importance weights satisfy $\infnorm{w} \leq \infnorm{w^*}$, indeed $\infnorm{w}$ might be bounded even if $\infnorm{w^*}$ is not.

Every distribution $Q$ on $\X$ induces a distribution on $\mS$, let $\s \sim Q$ denote sampling from $\mS$ according to it.

\begin{lemma}
\label{lem:partition}
Let $Q$ be the $(R, \mS)$ reweighting of $P$. Then $Q$ and $R$ induce identical distributions on $\mS$. For every $i \in [m]$, $Q|_{S_i} = P|_{S_i}$.
\end{lemma}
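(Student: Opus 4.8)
The plan is to verify both claims by direct computation from the definition of the $(R,\mS)$-reweighting, exploiting the fact that for $x \in S_i$ the model has the explicit closed form $Q(x) = w(S_i)\,P(x) = \frac{R(S_i)}{P(S_i)}\,P(x)$, where the weight $w(S_i)$ is constant across the part $S_i$.

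First I would establish that $Q$ and $R$ induce the same distribution on $\mS$. Since a distribution on the finite collection $\mS = \{S_1,\ldots,S_m\}$ is determined by the masses it assigns to each part, it suffices to show $Q(S_i) = R(S_i)$ for every $i$. Summing the explicit formula over $x \in S_i$ and pulling the constant weight out of the sum gives
\[
Q(S_i) = \sum_{x \in S_i} \frac{R(S_i)}{P(S_i)} P(x) = \frac{R(S_i)}{P(S_i)} \sum_{x \in S_i} P(x) = \frac{R(S_i)}{P(S_i)} P(S_i) = R(S_i),
\]
as desired. As a by-product, summing over all $i$ and using that the $S_i$ partition $\X$ confirms that $Q$ is a genuine probability distribution (its total mass is $\sum_i R(S_i) = 1$), which is needed for the conditional distributions below to be well-defined.

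For the second claim, I would compute the conditional distribution $Q|_{S_i}$ for $x \in S_i$, substituting $Q(S_i) = R(S_i)$ from the previous step:
\[
Q|_{S_i}(x) = \frac{Q(x)}{Q(S_i)} = \frac{\frac{R(S_i)}{P(S_i)} P(x)}{R(S_i)} = \frac{P(x)}{P(S_i)} = P|_{S_i}(x),
\]
so the two conditional distributions coincide. The intuition is that the reweighting multiplies $P$ by a factor that is constant within each part $S_i$, so it can only redistribute mass \emph{across} parts (matching it to $R$), while leaving the \emph{within-part} shape of $P$ untouched.

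I do not anticipate any genuine obstacle, since the argument reduces to a one-line calculation in each case; the only point requiring care is bookkeeping around degenerate parts. Both the weight $w(S_i) = R(S_i)/P(S_i)$ and the conditional $Q|_{S_i}$ presuppose $P(S_i) > 0$ (and hence $Q(S_i) = R(S_i) > 0$), so I would record this nondegeneracy as a standing assumption on the partition, under which every quantity above is well-defined.
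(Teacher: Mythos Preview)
Your proposal is correct and matches the paper's proof essentially line for line: the paper computes $Q(S_i) = \sum_{x \in S_i} P(x)w(S_i) = P(S_i)\cdot R(S_i)/P(S_i) = R(S_i)$ for the first claim, and for the second simply observes that since every $x \in S_i$ receives the same weight $w(S_i)$, the conditional $Q|_{S_i}$ coincides with $P|_{S_i}$. Your version is slightly more explicit (writing out the ratio $Q(x)/Q(S_i)$ and noting the nondegeneracy assumption $P(S_i)>0$), but the approach is identical.
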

\begin{proof}[Proof of Lemma \ref{lem:partition}]
    Note that for every $i \in [m]$, 
    \[ Q(S_i) = \sum_{x \in S_i} P(x)w(S_i) = P(S_i) \frac{R(S_i)}{P(S_i)}  = R(S_i).\]
    Under $Q$ every $x \in S_i$, has the same importance weight $w(S_i)$ relative to $P$, hence the conditional distributions $Q|_{S_i}$ and $P|_{S_i}$ are identical. 
\end{proof}

This lemma gives a natural coupling of $Q$ and $R$: sample $\s \sim R$, and then sample $\x \sim P|_{\s}$ to generate a sample from $Q$ and $\x' \sim R|_{\s}$ to generate a sample from $R$. 

We next define the notion of multi-calibrated partitions.

\begin{definition}
\label{def:mcab}
    Let $\mC$ be a collection of subsets of $\X$.
    We say that the partition $\mS$ is $\alpha$-\mcbd for $(P, R, \mC)$  if for all $C \in \mC$,
    \begin{align}
    \label{eq:def-mcab}
        \E_{\s \sim R}\big[\big|{R|_{\s}(C) - P|_\s(C)\big|}\big] \leq \alpha. 
    \end{align}
\end{definition}

The original notion of $\alpha$-\smcbn from \cite{gopalan2021multicalibrated}.
requires that for every $i \in [m]$ and $C \in \mC$
    \begin{align*}
        \big|R|_{S_i}(C) - P|_{S_i}(C)\big| \leq \alpha. 
    \end{align*}
Our notion of $\alpha$-\mcbn is weaker, since it only requires closeness of the conditional distributions on average, hence it is implied by $\alpha$-\smcbn. For $\mC$ which is weakly agnostically learnable, the algorithm for $(\alpha, \beta)$-multi-calibration in \cite{gopalan2021multicalibrated} can be used to compute an $\alpha$-\mcbd partition by setting $\beta = 1/\infnorm{w^*}$.  The number of states is $\poly(\log(\infnorm{w^*}), 1/\alpha)$, and the running time is in time $\poly(\infnorm{w^*}, 1/\alpha)$. We refer the reader to \cite{SBD2, hkrr2018, gopalan2021multicalibrated} for the definition of weak agnostic learning and a discussion of when it is reasonable.  

\section{Attribution from multicalibration}
\label{sec:mcab}

The main theorem in this section is the following:

\begin{theorem}
\label{thm:att-mcab}
If $\mS$ is $\alpha$-\mcbd\ for $(P, R, \mC)$, then the $(R, \mS)$ reweighting of $P$, $Q = w \cdot P$  satisfies $(\alpha, \beta)$ multi-group attribution for $\mC$ where $\beta = 2\alpha \log(\infnorm{w})$.
\end{theorem}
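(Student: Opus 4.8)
The plan is to verify the two conditions of Definition \ref{def:ab-attr} separately, using Lemma \ref{lem:partition} to translate every statement about $Q$ into one about $R$ and $P$ on the blocks of the partition $\mS$, and then invoking the $\alpha$-\mcbd hypothesis (Definition \ref{def:mcab}), which is already stated in averaged form.

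First I would dispatch approximate multiaccuracy, Equation \eqref{eq:ab-att1}. By Lemma \ref{lem:partition} we have $Q(S_i) = R(S_i)$ and $Q|_{S_i} = P|_{S_i}$, so that $Q(C) = \sum_i R(S_i) P|_{S_i}(C)$ while $R(C) = \sum_i R(S_i) R|_{S_i}(C)$. Subtracting termwise and applying the triangle inequality gives
\begin{align*}
\abs{Q(C) - R(C)} \leq \sum_i R(S_i)\,\big|P|_{S_i}(C) - R|_{S_i}(C)\big| = \E_{\s \sim R}\big[\big|R|_{\s}(C) - P|_{\s}(C)\big|\big] \leq \alpha,
\end{align*}
which is exactly the $\alpha$-\mcbd condition. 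This step is routine.

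For the approximate Pythagorean property, Equation \eqref{eq:ab-att2}, the key reduction is the identity underlying Lemma \ref{lem:eq-est}: combining Equations \eqref{eq:est1}, \eqref{eq:est2} and \eqref{eq:est2.1} shows that the quantity inside the absolute value equals exactly $\E_{Q|_C}[\log w] - \E_{R|_C}[\log w]$, since the $\logf{P(C)}{Q(C)}$ terms cancel. So it suffices to bound this difference of log-weight expectations by $\beta / R(C)$. The crucial structural fact is that $w$ is constant on each block, $w(x) = w(S_i)$ for $x \in S_i$, so, writing $a_i = Q(S_i \cap C) = R(S_i)P|_{S_i}(C)$ and $b_i = R(S_i \cap C) = R(S_i)R|_{S_i}(C)$ (again via Lemma \ref{lem:partition}), the difference collapses to the finite sum
\begin{align*}
\E_{Q|_C}[\log w] - \E_{R|_C}[\log w] = \sum_i \left(\frac{a_i}{Q(C)} - \frac{b_i}{R(C)}\right)\log w(S_i).
\end{align*}

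The main obstacle is the mismatch between the two normalizations $Q(C)$ and $R(C)$: a naive triangle inequality splits the difference but leaves $1/Q(C)$ in the denominator, whereas Definition \ref{def:ab-attr} demands $1/R(C)$. The fix is to choose the algebraic split carefully, writing $a_i R(C) - b_i Q(C) = a_i(R(C) - Q(C)) + (a_i - b_i)Q(C)$ and dividing by $Q(C)R(C)$, which yields two terms each normalized by $R(C)$:
\begin{align*}
\E_{Q|_C}[\log w] - \E_{R|_C}[\log w] = \frac{R(C) - Q(C)}{Q(C)R(C)}\sum_i a_i \log w(S_i) + \frac{1}{R(C)}\sum_i (a_i - b_i)\log w(S_i).
\end{align*}
I would then bound each piece using $|\log w(S_i)| \leq \log\infnorm{w}$ together with the two facts already in hand. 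For the second term, $\sum_i |a_i - b_i| = \E_{\s \sim R}[\,|R|_{\s}(C) - P|_{\s}(C)|\,] \leq \alpha$ (multi-calibration) bounds it by $\alpha\log\infnorm{w}/R(C)$. For the first term, $\sum_i a_i \log w(S_i) = Q(C)\,\E_{Q|_C}[\log w]$ has magnitude at most $Q(C)\log\infnorm{w}$, and $|R(C) - Q(C)| \leq \alpha$ (multiaccuracy) bounds it by $\alpha\log\infnorm{w}/R(C)$. Summing gives $2\alpha\log\infnorm{w}/R(C) = \beta/R(C)$, as claimed. The only subtlety to watch is that this particular split (isolating $(a_i-b_i)Q(C)$ rather than $(a_i-b_i)R(C)$) is what produces the correct $R(C)$ in the denominator; the symmetric choice would leave $Q(C)$ there instead.
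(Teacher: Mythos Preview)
Your proof is correct and follows essentially the same route as the paper: reduce the Pythagorean defect to $\big|\E_{Q|_C}[\log w]-\E_{R|_C}[\log w]\big|$ via Lemma \ref{lem:eq-est}, use that $\log w$ is constant on blocks, and split the difference $a_i/Q(C)-b_i/R(C)$ into a piece controlled by $|Q(C)-R(C)|\leq\alpha$ and a piece controlled by $\sum_i|a_i-b_i|\leq\alpha$, each contributing $\alpha\log\infnorm{w}/R(C)$. The only difference is organizational: the paper isolates the bound $\sum_i\big|Q|_C(S_i)-R|_C(S_i)\big|\leq 2\alpha/R(C)$ as a standalone lemma (Lemma \ref{lem:stat}) and then applies H\"older, whereas you carry the $\log w(S_i)$ weights through the split directly; the underlying algebra and the two $\alpha/R(C)$ contributions are identical.
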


Let $Q$ be the $(R, \mS)$ reweighting of $P$. It is easy to show that \mcbn implies multi-accuracy (Lemma \ref{lem:exp-close} in Appendix \ref{app:mcab}).
The crux is to show the Pythagorean property. 
We describe the main technical ideas used in the proof. Multicalibration guarantees the closeness of the probability of belonging to $C$ under $Q$ and $R$ conditioned on a (random) set $S_i$. The following lemma instead conditions on $C \in \mC$ and considers the distributions induced by $R|_C$ and $Q|_C$ on the sets $S_i$ in the partition, and  shows that  they are close assuming multicalibration. 

\begin{lemma}
\label{lem:stat}
     For every  $C \in \mC$, we have
     \begin{align}
         \label{eq:sd}
         \lt|\sum_{i \in [m]}R|_C(S_i) - Q|_C(S_i) \rt| \leq \frac{2\alpha}{R(C)}.
     \end{align}
\end{lemma}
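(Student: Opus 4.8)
The left-hand side is the $\ell_1$ distance between the two distributions that $R|_C$ and $Q|_C$ induce on the partition $\mS$, so I read the claim as $\sum_{i\in[m]}\abs{R|_C(S_i) - Q|_C(S_i)} \leq 2\alpha/R(C)$. My plan is to write both induced distributions explicitly and then control them via multicalibration. Using Lemma \ref{lem:partition}, which gives $Q(S_i) = R(S_i)$ and $Q|_{S_i} = P|_{S_i}$, I would write $R(S_i\cap C) = R(S_i)R|_{S_i}(C)$ and $Q(S_i \cap C) = R(S_i) P|_{S_i}(C)$, so that
\[ R|_C(S_i) = \frac{R(S_i)R|_{S_i}(C)}{R(C)}, \qquad Q|_C(S_i) = \frac{R(S_i)P|_{S_i}(C)}{Q(C)}. \]

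The obstacle is that these two distributions carry different normalizers, $R(C)$ and $Q(C)$, so the $\alpha$-multicalibration bound of Definition \ref{def:mcab} cannot be applied to the differences termwise. First I would show the normalizers are close. Summing over $i$ gives $R(C) - Q(C) = \sum_i R(S_i)\lb R|_{S_i}(C) - P|_{S_i}(C)\rb$, and the triangle inequality together with Definition \ref{def:mcab} yields
\[ \abs{R(C) - Q(C)} \leq \sum_i R(S_i)\abs{R|_{S_i}(C) - P|_{S_i}(C)} = \E_{\s \sim R}\big[\abs{R|_{\s}(C) - P|_{\s}(C)}\big] \leq \alpha. \]
This is the multiaccuracy consequence of multicalibration, and it is the key auxiliary estimate.

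With the normalizers controlled, I would set $a_i = R(S_i \cap C)$ and $b_i = Q(S_i\cap C)$, so that $R|_C(S_i) = a_i/R(C)$ and $Q|_C(S_i) = b_i/Q(C)$, and split each difference as
\[ \frac{a_i}{R(C)} - \frac{b_i}{Q(C)} = \frac{a_i - b_i}{R(C)} + b_i\lt(\frac{1}{R(C)} - \frac{1}{Q(C)}\rt). \]
Summing absolute values, the first group contributes $\frac{1}{R(C)}\sum_i\abs{a_i - b_i} = \frac{1}{R(C)}\E_{\s\sim R}\big[\abs{R|_\s(C) - P|_\s(C)}\big] \leq \alpha/R(C)$ by multicalibration, while the second contributes $\frac{\abs{Q(C) - R(C)}}{R(C)Q(C)}\sum_i b_i = \frac{\abs{Q(C) - R(C)}}{R(C)} \leq \alpha/R(C)$, using $\sum_i b_i = Q(C)$ and the normalizer bound above. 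Adding the two gives the claimed $2\alpha/R(C)$.

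The main difficulty, and the reason the bound carries a factor of $2$, is precisely this normalization mismatch: one copy of $\alpha/R(C)$ comes from the numerator discrepancy (a direct application of multicalibration) and the other from the denominator discrepancy (the multiaccuracy estimate $\abs{R(C)-Q(C)}\leq\alpha$). Everything else is routine manipulation of conditional probabilities, so once the decomposition isolating these two sources is in place the estimate follows immediately.
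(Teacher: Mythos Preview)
Your proposal is correct and follows essentially the same route as the paper: the paper also writes $Q(S_i\cap C)=R(S_i)P|_{S_i}(C)$ and $R(S_i\cap C)=R(S_i)R|_{S_i}(C)$ via Lemma~\ref{lem:partition}, applies the identical add-and-subtract split isolating a numerator term (bounded by $\alpha/R(C)$ via multicalibration) and a normalizer term (bounded by $\alpha/R(C)$ via $|Q(C)-R(C)|\leq\alpha$, which is Lemma~\ref{lem:exp-close}), and sums. Your reading of the statement as the $\ell_1$ distance $\sum_i|R|_C(S_i)-Q|_C(S_i)|$ is also what the paper actually proves and uses downstream.
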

\begin{proof}
    By Lemma \ref{lem:partition} we have 
    \[ Q(S_i \cap C) = R(S_i) P|_{S_i}(C), \ \ R(S_i \cap C) = R(S_i)R|_{S_i}(C).\] 
    Hence
    \begin{align*}
     \lt|\frac{Q(S_i \cap C)}{Q(C)} -  \frac{R(S_i \cap C)}{R(C)}\rt|   &= R(S_i)\left|\frac{P|_{S_i}(C)}{Q(C)} - \frac{R|_{S_i}(C)}{R(C)} \right|\\
        & \leq \frac{R(S_i)}{R(C)}\abs{P|_{S_i}(C) - R|_{S_i}(C)} + R(S_i)P|_{S_i}(C)\left|\frac{1}{Q(C)} - \frac{1}{R(C)}\right|\\
        & \leq \frac{R(S_i)}{R(C)}\abs{P|_{S_i}(C) - R|_{S_i}(C)} +
        Q(S_i \cap C)\frac{\alpha}{Q(C)R(C)}
    \end{align*}
    where we use $|Q(C) - R(C)| \leq \alpha$ by Lemma \ref{lem:exp-close}. 
    We use this to bound the LHS of Equation \eqref{eq:sd} as
    \begin{align*}
             \lt|\sum_{i \in [m]}Q|_C(S_i) - R|_C(S_i) \rt| &\leq  \sum_{i \in [m]} \lt|\frac{Q(S_i \cap C)}{Q(C)} -  \frac{R(S_i \cap C)}{R(C)}\rt|\\
            & \leq \sum_{i \in [m]}\frac{R(S_i)}{R(C)}\abs{P|_{S_i}(C) - R|_{S_i}(C)} + \sum_{i \in [m]} Q(S_i \cap C)\frac{\alpha}{Q(C)R(C)}\\
            & \leq \frac{\alpha}{R(C)} + Q(C)\frac{\alpha}{Q(C)R(C)} = \frac{2\alpha}{R(C)}.
    \end{align*}
where the last line uses the definition of \mcbn.
\end{proof}

Let us sketch how this helps prove the Pythagorean property. By Lemma \ref{lem:eq-est}, It suffices to show that the random variable $\log(w)$ has similar expectations under $R|_C$ and $Q|_C$. By the definition of $w$, $\log(w)$ is constant on each $S_i \in \mS$. Since $Q$ and $R$ induce statistically close distributions on $\mS$ and $\log(w)$ is a bounded by $\log(\infnorm{w})$ we can bound the difference in expectation. Formally, we prove the following bound:

\begin{lemma}
\label{lem:pyth}
For every  $C \in \mC$,
\begin{gather*}
    \abs{\KL{R|_C}{P|_C} - \KL{R|_C}{Q|_C} - \KL{Q|_C}{P|_C}} \leq \frac{2\alpha}{R(C)}\log(\infnorm{w}).
\end{gather*}
\end{lemma}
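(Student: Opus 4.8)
The plan is to reduce the signed Pythagorean defect inside the absolute value to the single difference $\E_{R|_C}[\log(w(x))] - \E_{Q|_C}[\log(w(x))]$, and then control this difference by combining two facts: $\log(w)$ is constant on each cell of the partition $\mS$ and bounded by $\log(\infnorm{w})$, and $R|_C$ and $Q|_C$ induce $\ell_1$-close distributions on $\mS$ by Lemma \ref{lem:stat}.

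First I would establish the identity
\[
\KL{R|_C}{P|_C} - \KL{R|_C}{Q|_C} - \KL{Q|_C}{P|_C} = \E_{R|_C}[\log(w(x))] - \E_{Q|_C}[\log(w(x))].
\]
This is already implicit in Lemma \ref{lem:eq-est}: Equations \eqref{eq:est2} and \eqref{eq:est2.1} give $\KL{R|_C}{P|_C} - \KL{R|_C}{Q|_C} = \E_{R|_C}[\log(w(x))] + \logf{P(C)}{Q(C)}$, while Equation \eqref{eq:est1} gives $\KL{Q|_C}{P|_C} = \E_{Q|_C}[\log(w(x))] + \logf{P(C)}{Q(C)}$. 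Subtracting cancels the common $\logf{P(C)}{Q(C)}$ term and yields the identity, so it suffices to bound $\abs{\E_{R|_C}[\log(w(x))] - \E_{Q|_C}[\log(w(x))]}$.

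Next I exploit the structure of the reweighting. Since $Q$ is the $(R,\mS)$ reweighting of $P$, we have $w(x) = w(S_i) = R(S_i)/P(S_i)$ for every $x \in S_i$, so $\log(w(x))$ depends only on the cell containing $x$. Both expectations therefore collapse to sums over cells, and
\[
\E_{R|_C}[\log(w(x))] - \E_{Q|_C}[\log(w(x))] = \sum_{i \in [m]} \big(R|_C(S_i) - Q|_C(S_i)\big)\log(w(S_i)).
\]
Using $\abs{\log(w(S_i))} \leq \log(\infnorm{w})$ (immediate from the definition of $\infnorm{w}$) and pulling it outside the sum gives a bound of $\log(\infnorm{w}) \sum_{i\in[m]} \abs{R|_C(S_i) - Q|_C(S_i)}$. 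The remaining sum is precisely the $\ell_1$ distance between the distributions that $R|_C$ and $Q|_C$ induce on $\mS$, which the proof of Lemma \ref{lem:stat} bounds by $2\alpha/R(C)$. Multiplying the two factors yields the desired $\tfrac{2\alpha}{R(C)}\log(\infnorm{w})$.

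No serious obstacle remains once Lemma \ref{lem:stat} is available; the real work was front-loaded there, where the mismatch between normalizing by $Q(C)$ and by $R(C)$ had to be absorbed using the multiaccuracy estimate $\abs{Q(C)-R(C)}\leq\alpha$. The only point demanding care here is that I must invoke the sum-of-absolute-values ($\ell_1$) form established inside the proof of Lemma \ref{lem:stat}, not the literal statement with the absolute value outside the sum — the latter is vacuous, since $R|_C$ and $Q|_C$ are both probability measures on $\mS$ and so the signed sum of their differences is identically zero.
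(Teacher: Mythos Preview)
Your proposal is correct and follows essentially the same route as the paper: reduce the Pythagorean defect to $\abs{\E_{R|_C}[\log w] - \E_{Q|_C}[\log w]}$ via Equations \eqref{eq:est1}, \eqref{eq:est2}, \eqref{eq:est2.1}, then use that $\log w$ is constant on cells and bounded by $\log(\infnorm{w})$ together with the $\ell_1$ bound from Lemma \ref{lem:stat}. Your final remark is also on point: the displayed statement of Lemma \ref{lem:stat} has the absolute value outside the sum (which would be identically zero), whereas its proof actually establishes the termwise bound $\sum_{i}\lvert R|_C(S_i)-Q|_C(S_i)\rvert \le 2\alpha/R(C)$, and it is this $\ell_1$ form that both you and the paper need here.
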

\begin{proof}
By Equations \eqref{eq:est2} and \eqref{eq:est2.1} 
\begin{align*}
    \KL{R|_C}{P|_C} - \KL{R|_C}{Q|_C} &=  
    \E_{\s \sim R|_C}[\log(w(\s))] + \logf{P(C)}{Q(C)}  \\
\end{align*}
By Equation \eqref{eq:est1}
\begin{align*}
    \KL{Q|_C}{P|_C} &= \E_{\x \sim Q|_C} \left[ \logf{Q|_C(\x)}{P|_C(\x)}\right] = \E_{\s \sim Q|_C}[\log(w(\s))] + \logf{P(C)}{Q(C)}
\end{align*}
    Subtracting we get 
    \begin{align*}
    \abs{\KL{R|_C}{P|_C} - \KL{R|_C}{Q|_C} - \KL{Q|_C}{P|_C}} &=\abs{\E_{\s \sim R|_C}[\log(w(\s)] - \E_{\s \sim Q|_C}[\log(w(\s)]} \\
    & \leq \lt|\sum_{i \in [m]}R|_C(S_i) - Q|_C(S_i) \rt|\max_{i \in [m]}|\log(w(S_i))|\\
    &\leq 2\alpha\frac{\log(\infnorm{w})}{R(C)}.
    \end{align*}
    where we use the $(1, \infty)$ version of Holder's inequality, and then Lemma \ref{lem:stat}.
\end{proof}

As discussed before, the degradation for small $R(C)$ is expected. Similarly, some dependence on $\infnorm{w}$ is to be expected, since if $\infnorm{w}$ is unbounded, then so are $\KL{R}{P}$ and $\KL{Q}{P}$. The proof of Theorem \ref{thm:att-mcab} is immediate from Lemmas \ref{lem:exp-close} and \ref{lem:pyth}.

\subsection{Sandwiching from attribution}

We show that sandwiching bounds for $\log(w)$ are implied by multi-group attribution, in analogy with the bounds for $w$ proved in \cite{gopalan2021multicalibrated} (see Equation \eqref{eq:grsw-sandwich}). Neither bound implies the other, indeed our upper bound only uses multiaccuracy, whereas it was shown that multiaccuracy does not imply either direction of \cite{gopalan2021multicalibrated}. The connection to the Pythagorean property makes our proof technically simpler.

\begin{corollary}
\label{cor:sandwich}
If  $Q$ satisfies $(\alpha, \beta)$ multi-group attribution for $(P, R, \mC)$, then for every  $C \in \mC$ where $R(C) > \alpha$,
\begin{align}
\logf{R(C)}{P(C)}  - \frac{\beta}{R(C)} - \frac{\alpha}{R(C) - \alpha}  &\leq \E_{R|_C} [\log(w(\x))]\leq \E_{R|_C}[\log(w^*(\x))] + \frac{\alpha}{R(C)}.
\end{align}
\end{corollary}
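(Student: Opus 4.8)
The plan is to first derive an exact expression for $\E_{R|_C}[\log(w(\x))]$ in terms of the three relevant KL divergences and the marginal probabilities, and then bound the two sides separately, using approximate multiaccuracy to control the logarithmic factors and the approximate Pythagorean property to control the divergence terms. Combining Equations \eqref{eq:est2} and \eqref{eq:est2.1} yields the master identity
\begin{equation*}
\E_{R|_C}[\log(w(\x))] = \KL{R|_C}{P|_C} - \KL{R|_C}{Q|_C} + \logf{Q(C)}{P(C)}.
\end{equation*}
Specializing the same identity to the ground truth, i.e.\ taking $w^*$ in place of $w$ and $R$ in place of $Q$ so that $\KL{R|_C}{R|_C} = 0$, gives the clean expression $\E_{R|_C}[\log(w^*(\x))] = \KL{R|_C}{P|_C} + \logf{R(C)}{P(C)}$.

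For the upper bound I would subtract these two identities to obtain $\E_{R|_C}[\log(w^*(\x))] - \E_{R|_C}[\log(w(\x))] = \KL{R|_C}{Q|_C} + \logf{R(C)}{Q(C)}$. Since the first term is nonnegative, it suffices to lower bound $\logf{R(C)}{Q(C)}$, equivalently to show $\logf{Q(C)}{R(C)} \leq \alpha/R(C)$. Approximate multiaccuracy \eqref{eq:ab-att1} gives $Q(C) \leq R(C) + \alpha$, and then the inequality $\log(1+x) \leq x$ at $x = \alpha/R(C)$ delivers exactly the claimed slack. As the remark before the corollary anticipates, this half uses only multiaccuracy together with the nonnegativity of KL divergence.

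For the lower bound I would start from the master identity and split the error into two independent pieces. The divergence part $\KL{R|_C}{P|_C} - \KL{R|_C}{Q|_C}$ is at least $\KL{Q|_C}{P|_C} - \beta/R(C) \geq -\beta/R(C)$ by the approximate Pythagorean property \eqref{eq:ab-att2} and nonnegativity of $\KL{Q|_C}{P|_C}$. The logarithmic part decomposes as $\logf{Q(C)}{P(C)} = \logf{R(C)}{P(C)} + \logf{Q(C)}{R(C)}$, and here I would invoke the other direction of multiaccuracy, $Q(C) \geq R(C) - \alpha$, together with $\log(1-x) \geq -x/(1-x)$ at $x = \alpha/R(C)$; this is precisely where the hypothesis $R(C) > \alpha$ and the term $\alpha/(R(C)-\alpha)$ enter. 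Adding the two pieces gives the stated lower bound.

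The computations are routine once the master identity is established; the only genuine care needed is in selecting the correct one-sided logarithmic inequality for each direction ($\log(1+x) \leq x$ above and $\log(1-x) \geq -x/(1-x)$ below) and in tracking which direction of the multiaccuracy inequality is invoked, so that every sign comes out right. The conceptually interesting point, that the upper bound requires nothing beyond multiaccuracy while the lower bound genuinely uses the Pythagorean property, falls out transparently from this decomposition rather than being an obstacle.
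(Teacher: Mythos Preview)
Your proposal is correct and follows essentially the same approach as the paper: both establish the identity $\E_{R|_C}[\log(w(\x))] = \KL{R|_C}{P|_C} - \KL{R|_C}{Q|_C} + \logf{Q(C)}{P(C)}$ and then handle the two directions separately using exactly the inequalities you name. The only cosmetic difference is that for the upper bound you subtract the master identity for $w$ from that for $w^*$ directly, whereas the paper unpacks the same chain of inequalities step by step; the substance, including which direction of multiaccuracy and which logarithmic inequality is used where, is identical.
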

\begin{proof}
We start by relating the central quantity to conditional KL divergence. For $x \in C$,
\begin{align*}
    w(x) = \frac{Q(x)}{P(x)}  = \frac{Q|_C(x)}{P|_C(x)} \frac{Q(C)}{P(C)}. 
\end{align*}
Hence 
\begin{align}
    \label{eq:to-kl}
    \E_{R|_C}[\log(w(\x))] &= \E_{R|_C}\logf{Q|_C(\x)}{P|_C(x)} + \logf{Q(C)}{P(C)}.
\end{align}
We can upper bound this as
\begin{align*}
    \E_{R|_C}\lt[\logf{Q|_C(\x)}{P|_C(x)}\rt] &= \KL{R|_C}{P|_C} - \KL{R|_C}{Q|_C} \\
    & \leq \KL{R|_C}{P|_C}\\
    & = \E_{R|_C}[\log(w^*(x))] + \logf{P(C)}{R(C)}.
\end{align*}
Hence using this in Equation \eqref{eq:to-kl},
\begin{align}
\E_{R|_C}[\log(w(\x))] & \leq  \E_{R|_C}[\log(w^*(x))] +\logf{P(C)}{R(C)} + \logf{Q(C)}{P(C)} \notag \\
&= \E_{R|_C}[\log(w^*(x))] +\logf{Q(C)}{R(C)}. \label{eq:ub-p1}
\end{align}
By multiaccuracy, $Q(C) \leq R(C) + \alpha$. Hence 
\begin{align}
    \logf{Q(C)}{R(C)} \leq \log\lt(1 + \frac{\alpha}{R(C)}\rt) \leq  \frac{\alpha}{R(C)}. \label{eq:ub-p2}
\end{align}
Plugging Equation \eqref{eq:ub-p2} into \eqref{eq:ub-p1} gives the upper bound.

To show the lower bound, we start from Equation \eqref{eq:to-kl}. We lower bound the first term using the Pythagorean property as 
\begin{align}
\E_{R|_C}\logf{Q|_C(\x)}{P|_C(x)} & = \KL{R|_C}{P|_C} - \KL{R|_C}{Q|_C}\notag\\
& \geq \KL{Q|_C}{P|_C} - \frac{\beta}{R(C)} \geq - \frac{\beta}{R(C)}\label{eq:lb-p1}
\end{align}
where the last inequality uses the non-negativity of KL divergence.
For the last  term, we use $Q(C) \geq R(C) - \alpha$ and the inequality $\log(1- x) \geq -x/(1-x)$ to get
\begin{align*}
 \logf{Q(C)}{R(C)} \geq \log\lt(1 - \frac{\alpha}{R(C)}\rt) \geq -\frac{\alpha}{R(C) - \alpha} 
\end{align*}
hence
\begin{align}
 \logf{Q(C)}{P(C)} = \logf{R(C)}{P(C)} + \logf{Q(C)}{R(C)}  \geq \logf{R(C)}{P(C)} -\frac{\alpha}{R(C) - \alpha}. \label{eq:lb-p2}
\end{align}

Plugging Equations \eqref{eq:lb-p1} and \eqref{eq:lb-p2}  into Equation \eqref{eq:to-kl} gives the lower bound 
\begin{align*}
    \E_{R|_C}[\log(w(\x))] \geq \logf{R(C)}{P(C)} -\frac{\beta}{R(C)} -\frac{\alpha}{R(C) - \alpha}.
\end{align*}
\end{proof}

\section{(No) Attribution from multi-accuracy}
\label{sec:macc}

We give an explicit example that shows that a number of popular algorithms in the literature do not satisfy multi-group attribution. First we present these algorithms and show they are essentially equivalent (an observation that we treat as folklore though we have not seen it stated explicitly). Since these algorithm guarantee multi-accuracy, this implies that multi-accuracy by itself not imply multi-group attribution.

 Recall  $K^\alpha = K^\alpha(R, \mC)$ is the set of all $\alpha$-\mact distributions for $R, \mC$. For every $\mC$ and $\alpha \geq 0$, $K^\alpha$ is a convex set, since it is given by linear constraints, and it is non-empty since $R \in K^\alpha$. Another important class of distributions is the set of Gibbs distributions. 
\begin{definition}
The set of all {\em Gibbs distributions} $\mG = \mG(P, \mC)$ is all distributions of the form
\begin{align}
    Q(x) &= P(x) \exp\Big(\sum_{c \in \mC} \lambda_c c(x) -\lambda_0\Big)  \label{eq:gibbs}
\end{align}
\end{definition}
where we use $c(x)$ to denote the indicator function for the set $c$. Writing $Q =w \cdot P$, we have $\log(w(x))= \sum_{c \in \mC} \lambda_c c(x) -\lambda_0$. 
The free parameters are $\lambda_\mC =\{\lambda_c\}_{c \in \mC}$, from these, we set the parameter $\lambda_0$ so that $\E_{P}[w(\x)] =1$. We define $\ell_1(Q) = \sum_{c \in \mC}|\lambda_c|$ to be the $\ell_1$ norm of the free parameters.
\eat{
\begin{align}
\lambda_0 &= \log\lt(\E_{P}\lt[\exp\lt(\sum_{c \in \mC} \lambda_c c(x)\rt)\rt]\rt)\label{eq:gibbs2}
\end{align}
}
We now describe three approaches in the literature that lead to essentially the same algorithm, which finds a \mact Gibbs distribution. This algorithm is known to out-perform other density-ratio estimation algorithms in the non-realizable setting \cite{kanamori2010theoretical}.
\begin{enumerate}
    \item \cite{KLIEP, sugiyamaBook} {\bf Log-linear KLIEP}: Find the Gibbs distribution $Q \in \mG$ that minimizes $\KL{R}{Q}$. This goal is find a good density-ratio estimate.
    \item \cite{nguyen2007nonparametric, nguyen2010estimating} {\bf Divergence estimation using Gibbs distributions:}  Find the Gibbs distribution $Q = w\cdot P$ that maximizes the lower bound $\E_R[\log(w(\x))]$ on $\KL{R}{P}$. This algorithm is proposed in Section 4A of \cite{nguyen2010estimating} for the goal of divergence estimation.
    \item \cite{jaynes1957information, della1997inducing, KazamaT, dudik2004performance, dudik2007maximum} {\bf MaxEnt:} Learn a model $Q^\alpha \in K^\alpha(R, \mC)$ for $R$ by finding  the distribution $Q^\alpha $ that minimizes $\KL{Q}{P}$. 
\end{enumerate}

The equivalence of (1) and (2) is well-known (it follows from Equation \eqref{eq:est2.1}). A generalization to $f$-divergences may be found in \cite{sugiyama2012adensity}, or Section 7.3.1 of \cite{sugiyamaBook}. We have not found the equivalence to (3) explicitly in the literature, but  it follows from known convex duality results. 

\begin{lemma}
    \label{lem:equiv}
    \cite{dudik2007maximum}
    $Q^\alpha \in K^\alpha \cap \mG$ is the optimal solution to the following programs
    \begin{gather} 
        \min_{Q \in K^\alpha} \KL{Q}{P},\label{eq:primal}\\
        \min_{Q \in \mG} \KL{R}{Q} + \alpha \ell_1(Q).\label{eq:dual}
    \end{gather}
\end{lemma}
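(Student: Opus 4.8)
The plan is to recognize Lemma \ref{lem:equiv} as an instance of the standard maximum-entropy / minimum-relative-entropy convex duality (as in \cite{dudik2007maximum}), specialized to the features $\{c(x)\}_{c \in \mC}$ being the indicators of the sets in $\mC$. The primal program \eqref{eq:primal} minimizes the strictly convex functional $\KL{Q}{P}$ over $K^\alpha$, which is the intersection of the probability simplex with the linear box constraints $|\E_Q[c(x)] - R(c)| \le \alpha$, one for each $c \in \mC$. Since $R \in K^\alpha$ (indeed $R$ satisfies every constraint with slack when $\alpha > 0$), the feasible region is nonempty and Slater's condition holds; on a discrete domain $K^\alpha$ is compact and the objective is continuous, so a unique minimizer $Q^\alpha$ exists. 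The goal is to show that $Q^\alpha$ is also the minimizer of \eqref{eq:dual} and that it is a Gibbs distribution.

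First I would form the Lagrangian. Writing each two-sided constraint $-\alpha \le \E_Q[c] - R(c) \le \alpha$ with a pair of nonnegative multipliers $\mu_c^+, \mu_c^-$ and setting $\lambda_c = \mu_c^+ - \mu_c^-$, the penalty collapses (at the optimal choice $\mu_c^+ + \mu_c^- = |\lambda_c|$) into
\[
\KL{Q}{P} + \sum_{c \in \mC}\lambda_c\big(\E_Q[c(x)] - R(c)\big) - \alpha\sum_{c \in \mC}|\lambda_c|,
\]
so that dualizing the box constraints is exactly what produces the $\alpha\,\ell_1(Q)$ regularizer. This is the one step I would treat carefully, since the passage from the pair $(\mu_c^+, \mu_c^-)$ to a single signed $\lambda_c$ carrying an $\ell_1$ penalty is where the two programs become visibly linked.

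Next I would perform the inner minimization over $Q$ on the simplex using the Gibbs variational principle (Donsker--Varadhan): for fixed $\lambda$, $\min_Q \big(\KL{Q}{P} + \E_Q[\sum_c \lambda_c c(x)]\big) = -\log \E_P[\exp(-\sum_c \lambda_c c(x))]$, attained uniquely by $Q_\lambda(x) \propto P(x)\exp(-\sum_c \lambda_c c(x))$, which is precisely of the form \eqref{eq:gibbs}. Substituting this back gives the dual function $g(\lambda)$ of $\lambda$ alone. A short computation---using $\KL{R}{Q_\lambda} = \KL{R}{P} - \sum_c \lambda_c R(c) + \log\E_P[\exp(\sum_c \lambda_c c)]$ together with the sign change $\lambda_c \mapsto -\lambda_c$ (which leaves $\sum_c|\lambda_c|$ unchanged)---shows that $g(\lambda)$ equals $\KL{R}{P}$ minus the objective of \eqref{eq:dual} under this reparametrization. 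Hence maximizing the dual is the same as minimizing $\KL{R}{Q} + \alpha\,\ell_1(Q)$ over $\mG$, and the two problems share optimizers.

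Finally, strong duality (justified by Slater's condition above) gives equality of optimal values, and since the primal minimizer $Q^\alpha$ is unique while the dual-optimal $\lambda^*$ yields a Gibbs distribution $Q_{\lambda^*} \in \mG$, the KKT/complementary-slackness conditions force $Q^\alpha = Q_{\lambda^*}$; in particular $Q^\alpha \in \mG$, and being primal-feasible, $Q^\alpha \in K^\alpha$, so $Q^\alpha \in K^\alpha \cap \mG$ solves both programs. The main obstacle I anticipate is not the algebra but the analytic bookkeeping when $\mC$ is infinite: there are then infinitely many constraints and dual coordinates, so I would either first prove the statement for finite $\mC$ and pass to the limit (noting that $\ell_1(Q) < \infty$ keeps only finitely many $\lambda_c$ nonzero), or invoke the infinite-dimensional duality theorem of \cite{dudik2007maximum} directly, checking that attainment of the dual optimum and strong duality survive in that setting.
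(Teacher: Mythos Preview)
The paper does not actually prove Lemma \ref{lem:equiv}; it is stated with a citation to \cite{dudik2007maximum} and prefaced by the remark that the equivalence ``follows from known convex duality results.'' So there is no proof in the paper to compare against. Your proposal supplies precisely the standard Lagrangian/Donsker--Varadhan derivation that the paper is alluding to: dualize the box constraints $|\E_Q[c]-R(c)|\le\alpha$ to get a signed multiplier $\lambda_c$ with an $\alpha|\lambda_c|$ penalty, solve the inner KL minimization to obtain a Gibbs distribution, and then identify the resulting dual objective with $\KL{R}{P}-\big(\KL{R}{Q_\lambda}+\alpha\,\ell_1(Q_\lambda)\big)$. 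This is correct and is exactly the argument in \cite{dudik2007maximum} (their Theorem 1 and its surrounding discussion), so your write-up is appropriate as a self-contained proof of a result the paper merely quotes.

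One small caveat: as you yourself flag, attainment of the dual optimum can fail at $\alpha=0$ (the optimal $Q^0$ may lie only in the closure of $\mG$, with unbounded $\ell_1$), and the paper acknowledges this implicitly by working with the regularized version. Your plan to either restrict to finite $\mC$ or invoke the infinite-dimensional theorem of \cite{dudik2007maximum} is the right way to handle the remaining bookkeeping; just make sure that in the $\alpha>0$ case you actually argue dual attainment (coercivity of $\lambda\mapsto \KL{R}{Q_\lambda}+\alpha\|\lambda\|_1$ suffices when $\alpha>0$), since strong duality alone gives equality of values but not existence of a dual optimizer.
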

The first program is the one solved by \maxent. The second is an $\ell_1$-regularized version of the program considered by Log-linear KLIEP and \cite{nguyen2010estimating}. We derive their exact program by setting $\alpha =0$. \eat{Without regularization, the optimal might lie in the closure of $\mG$, so $\ell_1(Q)$ could be unbounded.} The Pythagorean property is known for the distributions $Q^\alpha$ when $\alpha =0$ without conditioning on $\mC$ (see \cite{della1997inducing}). 
\eat{
In the case of $\alpha =0$, the classic work of \cite[Proposition  4]{della1997inducing} implies an exact Pythagorean theorem. For $\alpha > 0$, the Pythagorean inequality for projections onto convex sets \cite[Theorem 11.6.1]{CTbook} implies $\KL{R}{P} \geq \KL{R}{Q^\alpha} + \KL{Q^\alpha}{P}$.
Theorem \ref{thm:pyth} in Appendix \ref{app:macc} proves the following exact bound for all $\alpha$ which subsumes both these results, and implies an approximate Pythagorean theorem for $(P, Q^\alpha, R)$:
    \begin{align}
    \label{eq:pyth-exact}
        \KL{R}{P} = \KL{R}{Q^\alpha} + \KL{Q^\alpha}{P} + \alpha \ell_1(Q^\alpha).
    \end{align}}
    
In contrast, we show that no such bound holds when conditioned on $\mC$; in fact even the soundness and \prog conditions implied by the Pythagorean property do not hold for $Q^\alpha$. We state our result for the case $\alpha =0$, it can be extended for any $\alpha \in [0, 1/4)$. We have not attempted to optimize the constant in the lower bound.  Recall the definition of $d(p,q)$ in Equation \eqref{eq:bkl}. 

\begin{theorem}
\label{thm:gap-1}
There exist distributions $P, R$ on $\zo^2$, a collections of sets $\mC$ and $C \in \mC$ where $R(C) = 1/2$ s.t.  
\begin{gather*}
    \label{eq:p-simple2}
    \KL{R|_C}{P|_C} =0, \\ \KL{R|_C}{Q^0|_C} = \KL{Q^0|_C}{P|_C} = d(3/4, 1/2) .
\end{gather*}
So $Q^0$ does not satisfy $(\alpha, \beta)$-multigroup attribution for $\mC$ for any $\beta < d(3/4,1/2)$.
\end{theorem}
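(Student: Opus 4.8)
The plan is to build an explicit instance on $\X=\zo^2$ in which the multiaccuracy constraint on one coordinate forces the \maxent model $Q^0$ to distort its conditional law on a slice $C$ defined by the other coordinate, even though $R$ and $P$ are conditionally identical on $C$. I would take $P$ uniform on the four points $\{00,01,10,11\}$, let $\mC$ consist of the two coordinate half-spaces $\{x_1=1\}$ and $\{x_2=1\}$ together with their complements, and set $C=\{x:x_1=1\}$. I choose $R$ to be the non-product distribution with $R(10)=R(11)=1/4$, $R(01)=1/2$, and $R(00)=0$, so that $R(C)=1/2$ while the marginal of $R$ on the first coordinate is $\mathrm{Ber}(1/2)$ and on the second coordinate is $\mathrm{Ber}(3/4)$.

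The first step is to pin down $Q^0$. By Lemma~\ref{lem:equiv} with $\alpha=0$, $Q^0$ is the unique distribution in $K^0(R,\mC)\cap\mG(P,\mC)$. Since $P$ is uniform and the features generating $\mG$ are exactly the coordinate indicators, the Gibbs family $\mG$ coincides with the family of product (independent-coordinate) distributions, and minimizing $\KL{Q}{P}$ subject to multiaccuracy is the same as maximizing entropy subject to matching the two marginals of $R$. Hence $Q^0$ is the independent coupling of $R$'s marginals, namely $\mathrm{Ber}(1/2)$ on the first coordinate and $\mathrm{Ber}(3/4)$ on the second.

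The second step is the verification on $C=\{x_1=1\}$. Conditioned on $C$, the free coordinate $x_2$ is uniform under both $R$ (because $R(10)=R(11)$) and $P$ (because $P$ is uniform), so $R|_C=P|_C$ and $\KL{R|_C}{P|_C}=0$. Under $Q^0$, by contrast, independence forces the conditional law of $x_2$ on $C$ to equal its global marginal, so $Q^0|_C=\mathrm{Ber}(3/4)$. A direct evaluation of the binary KL divergence $d(\cdot,\cdot)$ then gives $\KL{Q^0|_C}{P|_C}=d(3/4,1/2)$ and $\KL{R|_C}{Q^0|_C}=d(1/2,3/4)$; in particular both terms are positive while $\KL{R|_C}{P|_C}$ vanishes, so soundness \eqref{eq:ab-att2.1} and \prog \eqref{eq:ab-att2.2} both fail on $C$.

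Finally I would assemble the approximate Pythagorean quantity of \eqref{eq:ab-att2}: its left-hand side equals $\KL{R|_C}{Q^0|_C}+\KL{Q^0|_C}{P|_C}-\KL{R|_C}{P|_C}=d(1/2,3/4)+d(3/4,1/2)\geq 2\,d(3/4,1/2)$, whereas the admissible slack is $\beta/R(C)=2\beta$. Thus for every $\beta<d(3/4,1/2)$ the inequality is violated, which is exactly the asserted failure of $(\alpha,\beta)$ multi-group attribution. The main obstacle is the first step, identifying $Q^0$ exactly: the argument hinges on recognizing that for uniform $P$ and coordinate-indicator features the \maxent solution is the independent coupling of the prescribed marginals. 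Once this is in hand, the conceptual punchline is that the distortion of $Q^0$ on $C$ is inherited entirely from the marginal constraint on the \emph{other} coordinate, so attribution breaks on a set where $R$ and $P$ agree conditionally.
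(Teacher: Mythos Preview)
Your construction is essentially the same as the paper's: the paper also takes $P$ uniform on $\zo^2$, $\mC$ the four axis-parallel half-spaces, and $R$ the non-product distribution with marginals $\mathrm{Ber}(1/2)$ and $\mathrm{Ber}(3/4)$ (your $R$ is a coordinate relabeling/flip of theirs), and likewise identifies $Q^0$ as the independent coupling of the marginals and reads off the conditional divergences on the chosen slice. One small remark: you correctly compute $\KL{R|_C}{Q^0|_C}=d(1/2,3/4)$, which differs from the theorem's displayed value $d(3/4,1/2)$; the asymmetry does not affect the conclusion since $d(1/2,3/4)\ge d(3/4,1/2)$, and your final inequality $d(1/2,3/4)+d(3/4,1/2)\ge 2\,d(3/4,1/2)$ goes through.
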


\begin{proof}
Let $P$ be the uniform distribution on $\zo^2$. Let $R$ be the distribution where
\[ R(00) = 1/4, R(01) = 1/4, R(10) = 0, R(11) =1/2. \]
We denote the two coordinates $x_0, x_1$, and let $\mC$ consist of all subcubes of dimension $1$. Hence $\mC = \{x: x_i =a\}_{i \in \zo, a \in \zo}$. 

 The distribution $Q^0$ for $\alpha = 0$ is the product distribution which matches the marginal distributions on each coordinate: $Q^0(x_0 =1) = 1/2, Q^0(x_1 = 1) = 3/4$, and the coordinates are independent. The \macc\ constraints hold since 
 \[ Q(x_0 =1) = R(x_0 =1) =1/2\]
 \[ Q(x_1 = 1) = R(x_1 = 1) = 3/4 \] 
 and $Q^0$ is the maximum entropy distribution satisfying these constraints since the co-ordinates are independent. 
 
 Now consider the set $C =\{x_0 =0\}$. Let $B(p)$ denote the Bernoulli distribution with parameter $p$. It follows that $R|_C = P|_C = B(1/2)$, whereas $Q^0|_C = B(3/4)$. Hence 
 \[ \KL{R|_C}{P|_C} = 0, \KL{R|_C}{Q|_C} = \KL{Q|_C}{P|_C} = d(3/4, 1/2). \] 
 Assume that $Q$ satsifies $(\alpha,\beta)$ multi-group attribution, so that
 \begin{align*}
     2\cdot d(3/4, 1/2) = \abs{\KL{R|_C}{P|_C} - \KL{R|_C}{Q|_C} - \KL{Q|_C}{P|_C}} \leq \frac{\beta}{R(C)} 
 \end{align*}
 Since $R(C) =1/2$, this implies $\beta \geq d(3/4, 1/2)$ as desired.
 \end{proof}

Thus $0$-multiaccuracy does not imply $\beta$ close to $0$ for $(\alpha, \beta)$ multi-group attribution. In contrast, $0$-approximate multicalibration implies $\alpha = \beta =0$ by Theorem \ref{thm:att-mcab}.

\eat{
Hence the Pythagorean property is violated as
\begin{multline*} \ \abs{\KL{R|_C}{P|_C} - \KL{R|_C}{Q^0|_C} - \KL{Q^0|_C}{P|_C}}  \geq B. 
\end{multline*}
Note that the weaker properties as expressed in Equations \eqref{eq:att2.1},\eqref{eq:att2.2} are violated as well.

Our counterexample is similar to counterexamples in \cite{gopalan2021multicalibrated}, it constructs a small example on $\zo^2$ that shows a constant gap, and then uses tensoring to amplify the gap. 
}

\section{Experiments} \label{sec:experiments}
We  test  the performance of various algorithms for estimating KL divergence between distributions and the divergence when conditioned on various sub-populations, for mixtures of Gaussians and MNIST-based data. Our results are in line with what the theory predicts: multi-calibration provides a more accurate  estimate of the divergence per sub-population. We model the class $\mC$ as a collection of base classifiers with a learning algorithm.\footnote{The code for all our experiments can be found at \url{https://github.com/vatsalsharan/multigroup-kl}.}

\paragraph{The Algorithms.}
	We test four algorithms: (i) \KLIEP with code taken from \cite{KLIEP}. \KLIEP is a regression over the basic features of the data with a KL loss function; (ii) \ulsif from \cite{kanamori2009least}, which is a regression with least squares loss; (iii) an implementation of \ME aka log-linear \KLIEP as described in Section~\ref{sec:macc} (denoted by LLK); (iv) an implementation of  approximate multi-calibration as described in \cite{gopalan2021multicalibrated} (denoted by MC). The algorithm is an adaptation of the Boosting by branching programs algorithm \cite{mansour2002boosting, KearnsM99} to importance weight estimation. We implemented \ME and \MC in Python. While \KLIEP and \ulsif regress over the basic features of the data, both \ME and \MC work with any  family of binary base classifiers, effectively using the outputs of all classifiers in the family as the feature set. We test our algorithms using $3$ different families of base classifiers, all using the standard implementation in sklearn: (1) Random Forest with depth 5 and  $10$ estimators (RF5), (2) a threshold function over a single feature (DT1, this is a decision tree with depth $1$), (3) logistic regression (LR, we only test this for the MNIST data). We denote MC and LLK with logistic regression as MC-LR, LLK-LR and so on.

\emph{On the choice of $\mC$:}
Our goal is to test the performance of the algorithms, and not necessarily the expressiveness of the class $\mC$. Thus it makes sense to use relatively simple classifiers as our model for $\mC$ and see how the algorithms perform in attributing divergence. Simple classes like conjunctions of base features and shallow decision trees are of interest for $\mC$ form a fairness perspective.

\eat{Indeed, even a decision tree of depth $1$ (which is just a threshold function over a basic feature) produces surprisingly good results. }

\eat{Further, our synthetic data-set is simply a mixture of standard Gaussians, so one should do well with simple classifiers.  }

\paragraph{Mixtures of Gaussians.}
Our goal is to create a simple pair of distributions $P,R$, where the notion of sub-populations is natural. To this end we set $P$ to be a mixture of $k$ equally-weighted $d$-dimensional Gaussians for $k = \{5,10,15\}$ and $d=2$. Each component in the mixture has identity covariance, and their means are sampled from a Normal distribution with variance $k\cdot10000\cdot I$, therefore all the components are far apart. $R$ is similar to $P$ but with the means of all the Gaussians translated by $2.5$ along both coordinates.  See Fig.~\ref{fig:gaussian_data} for an example. We refer to each Gaussian in $P$ and its shifted counterpart in $Q$ as a pair. The divergence between every pair of Gaussians (such as those in the box in Fig.~\ref{fig:gaussian_data}) can be calculated by the closed form expression for the KL-divergence between shifted Gaussians, and is  $\approx 9$ for our parameters. Since the Gaussians are well separated (the KL divergence between any two components of $P$ is $>500$) this is also a good estimate for $\KL{R}{P}$. We generate $N=30000$ samples from $P$ and $R$ to train the algorithms. We report results for larger values of $N$ and $d$ in Appendix~\ref{sec:app-experiments} (we find them to be consistent with the results in this section).
We first test how well the algorithms estimate $\KL{R}{P}$, without conditioning on sub-populations. The results  in Table~\ref{tbl:kl-mixture-1} show that MC performs well consistently for  $k$ and with different base classifiers.

\begin{figure}
    \centering
    \begin{subfigure}{0.4\textwidth}
             \centering
    \includegraphics[width=\textwidth]{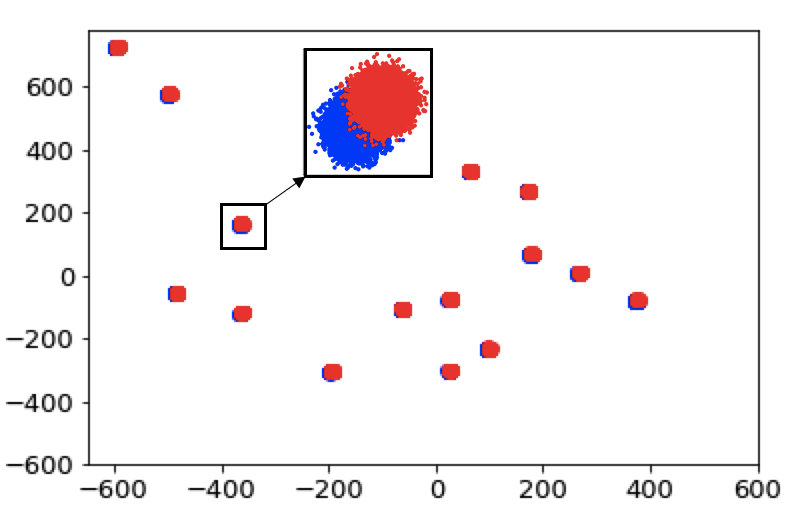}
   \caption{A mixture of $k=15$ Gaussians. $P$ is blue and $R$ is red. The zoomed-in box shows a sub-population---a pair of Gaussians from $P$ and $R$.}    \label{fig:gaussian_data}
   \end{subfigure}
   \begin{subfigure}{0.5\textwidth}
    \centering
    \includegraphics[width=\textwidth]{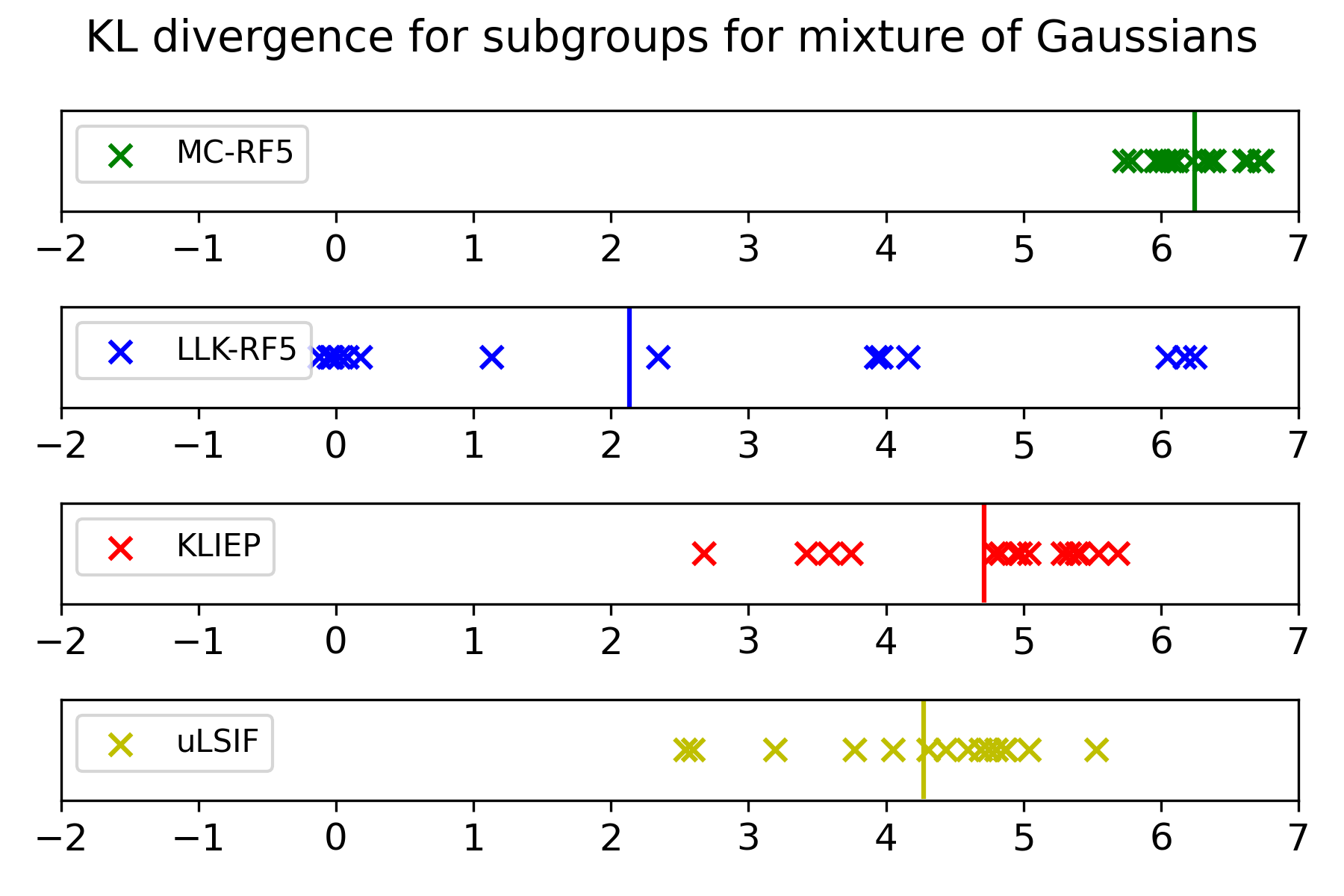}
    \caption{ KL estimates for the $15$ sub-populations.}
    \label{fig:muxture-sub-1}
\end{subfigure}
 \caption{KL estimation for mixtures of Gaussians, $k=15, d=2, N=30000$.}
\end{figure}

	\begin{table*}[h] 
		\caption{ KL estimation for mixture of $k$ Gaussians with $N=30000$, averaged over 5 trials}\label{tbl:kl-mixture-1}
		\small
		\centering
		\begin{tabular}{ c c  c c c c c c }
			\hline
			$k$ & KL & LLK-RF5 & MC-RF5 & LLK-DT1 & MC-DT1 & KLEIP & uLSIF\\
			\hline
			\hline
			5  & 9.02 & $6.43 \pm 0.26$  & $6.59 \pm 0.22$   & $1.97 \pm 0.31$   & $6.38 \pm 0.51$   & $5.41 \pm 0.21$ & $6.68 \pm 0.01$ \\
			10  & 9.02  & $3.64 \pm 1.00 $  & $6.03 \pm 0.36$    &  $0.68 \pm 0.08$ & $6.05 \pm 0.33$ & $5.02 \pm 0.21$ & $5.22 \pm 0.10$\\
			15  & 9.02    & $2.23 \pm 0.27$  & $5.92 \pm 0.04$   & $0.36 \pm 0.10$  & $5.04 \pm 0.58$ & $4.96 \pm 0.42$ & $4.39 \pm 0.14$\\
			\hline
		\end{tabular}
	\end{table*}

Once the model had been fitted, we use it \emph{without retraining} to estimate the divergence between pairs of Gaussians in the mixture. Formally this corresponds to conditioning on a set $C$ which is a bounding box around the pair of Gaussians, as demonstrated in Fig.~\ref{fig:gaussian_data}. Since the Gaussians are far apart and have negligible overlap in their densities, $\KL{R_C}{P_C}$ is very close to the closed form divergence calculated earlier between the two shifted Gaussians (${\approx9}$). Results are summarized in Fig.~\ref{fig:muxture-sub-1},  where the $x$-axis indicates the divergence, and each mark indicates a pair of sub-populations, with the horizontal line indicating the average. We see that \MC estimates the divergence across all sub-populations well, while all others have a fair bit of variance in their estimates for each pair and seem to miss a lot of the divergence from some pairs. Also, as the theory predicts, all estimates are \emph{lower bounds} on the true divergence. 

Figures \ref{fig:contour_me}, \ref{fig:contour_mc} show contours of the first $9$ pairs of Gaussians (first $9$ sub-populations) in the mixture of $k=10$ Gaussians. In these images we sampled from a mixture of $R$ and $P$ and colored the points by their importance weight. We can clearly see that \MC is  consistent in assigning the importance weights, approximately separating the cluster by a diagonal hyper-plane. On the other hand, \ME makes mistakes on a number of clusters and does not always succeed in separating the Gaussians well.
	
We report additional results in Appendix \ref{sec:app-experiments} including the standard deviation of the divergences across the sub-populations (with error bars), and experiments for some larger values of $N$ and $d$.

\begin{figure}
    \centering
    \begin{subfigure}{0.45\textwidth}
             \centering
    \includegraphics[width=\textwidth]{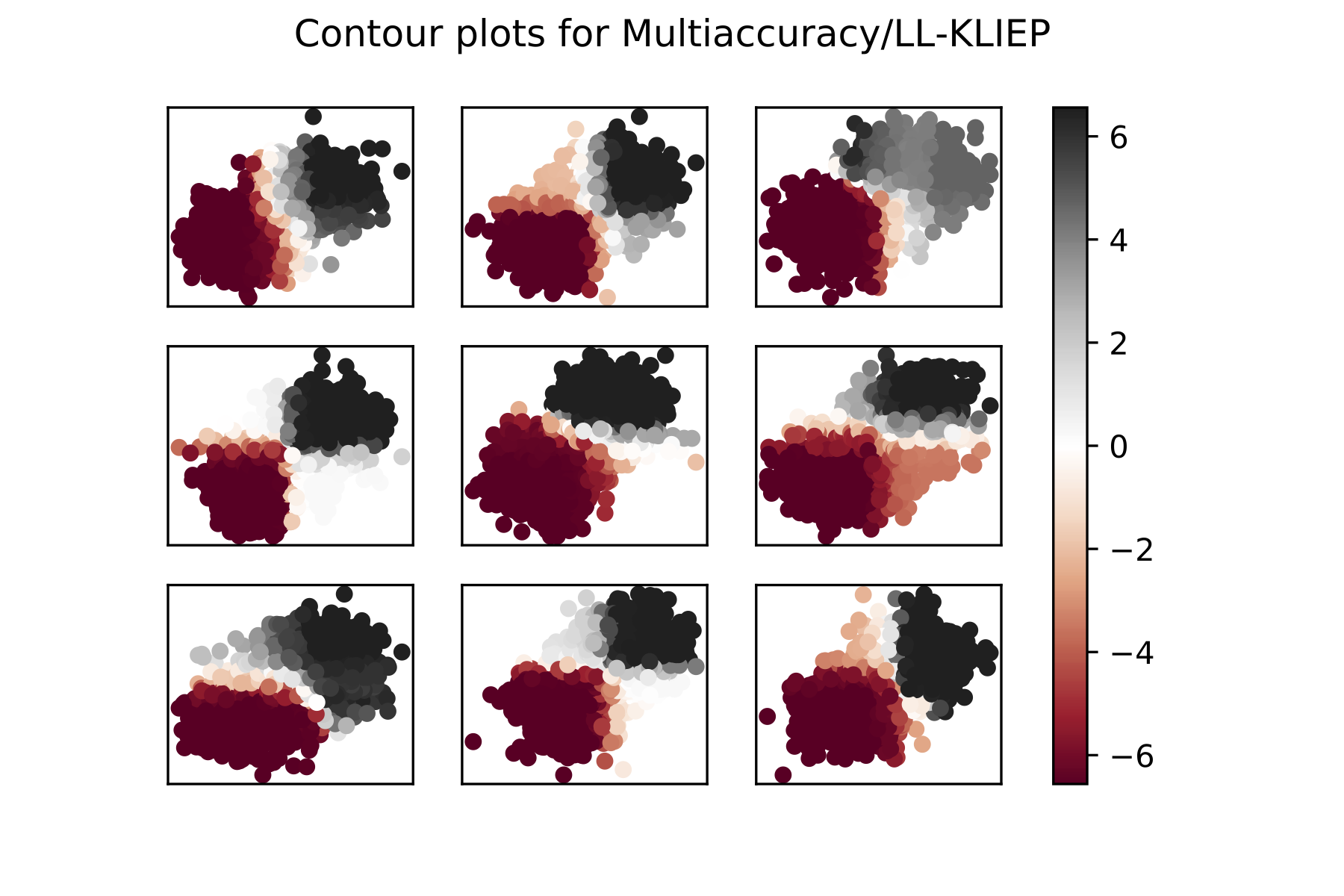}
   \caption{Color indicates the importance weight\\ assigned by LL-KLIEP.}    \label{fig:contour_me}
   \end{subfigure}
   \begin{subfigure}{0.45\textwidth}
    \centering
    \includegraphics[width=\textwidth]{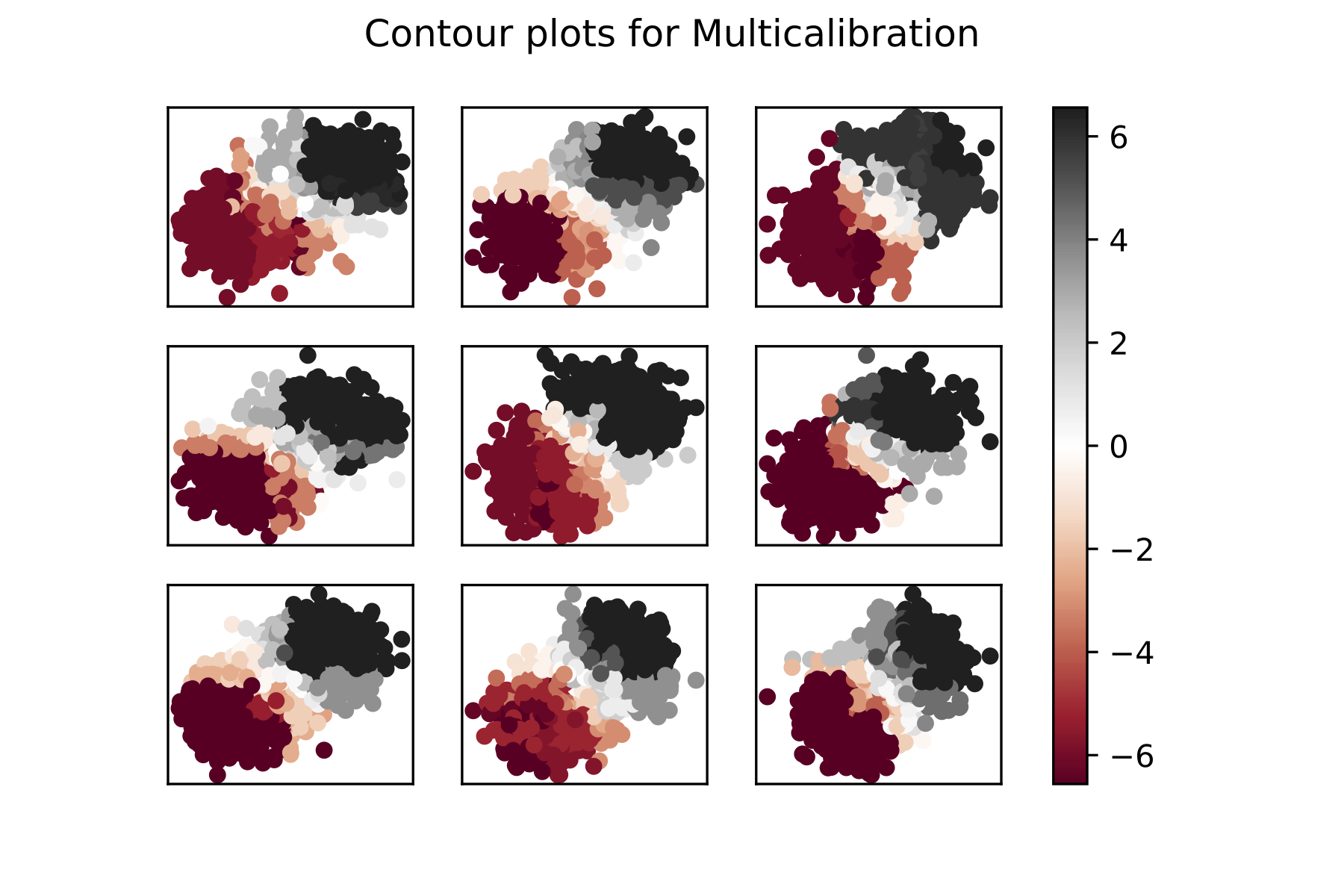}
    \caption{Color indicates the importance weight\\ assigned by the multi-calibration algorithm.}
    \label{fig:contour_mc}
\end{subfigure}
 \caption{The first 9 pairs of Gaussians in the mixture of $k=10$ Gaussians.}
\end{figure}

\paragraph{MNIST based data.}
Our second set of experiments are  based on MNIST images \cite{lecun-mnisthandwrittendigit-2010}. For a bias variable $\delta \in (0.5,1)$ we define the distribution $P_\delta$ to sample an even digit with probability $\delta$ and an odd digit with probability $1-\delta$. $R_\delta$ samples an odd digit with probability $\delta$ and an even digit with probability $1-\delta$. Therefore, assuming the distributions of odd and even digits have disjoint support, we have $\KL{R_\delta}{P_\delta} = d(\delta, 1 - \delta)$ (see Equation \eqref{eq:bkl}). 
As before, we first test the algorithms on estimating the divergence across the entire population. The results are summarized in Table~\ref{tbl:kl-mnist}. We see that \MC and \ME are quite similar when the base family of classifiers is a random forest. The quality of \ME drops significantly when the classifier is weaker: depth $1$ decision tree or logistic regression. We see that \KLIEP and \ulsif are significantly worse.

	\begin{table*}[h] 
	\caption{ KL estimation for MNIST}\label{tbl:kl-mnist}
		\small
		\centering
		\begin{tabular}{ c c c c c c c c c c }
			\hline
			bias & KL & LLK-RF5 & MC-RF5 & LLK-DT1 & MC-DT1 & LLK-LR & MC-LR & KLIEP & uLSIF\\
			\hline
			\hline
			0.6  & 0.12  & 0.07   & 0.07   & 0.04  & 0.05 & 0.04 & 0.04 & 0.04 & 0.05\\
			0.7  & 0.49  & 0.36   & 0.31    & 0.22  & 0.27 & 0.24 & 0.26 & 0.15 & 0.19\\
			0.8  & 1.2    & 0.92   & 0.89   & 0.55  & 0.79 & 0.64 & 0.72 & 0.43 & 0.42\\
			0.9  & 2.56  & 1.93   & 1.96   & 1.18  & 1.72 & 1.39 & 1.53 & 0.78 & 1.08\\
			0.95 & 3.85 & 2.96   & 2.85   & 1.55  & 2.58 & 1.9   & 2.22 & 1.32 & 1.16\\
			\hline
		\end{tabular}
	\end{table*}

In this data set we consider single digits as sub-populations. Note that the digits are not perfectly classified by our base classifiers, so strictly speaking they are not part of subsets for which the algorithm is multi-calibrated. Our goal here is precisely to test how well our predictions hold in settings which do not strictly conform to our assumptions. In our experiment we set the bias $\delta = 0.9$. We then set a sub-population $C$ to be images of two consecutive digits. Since one digit is odd and one digit is even we have $\KL{R_C}{P_C} = \KL{R}{P} = 2.55$. We tested all $10$ sub populations of this form. Results are depicted in Fig.~\ref{fig:mnist_subgroup}. We see \MC and \ME perform reasonably well, and much better than the rest of the algorithms. Interestingly, all algorithms struggle with the digits $4$ and $8$.

\begin{figure}
    \centering
    \includegraphics[width=0.65\textwidth]{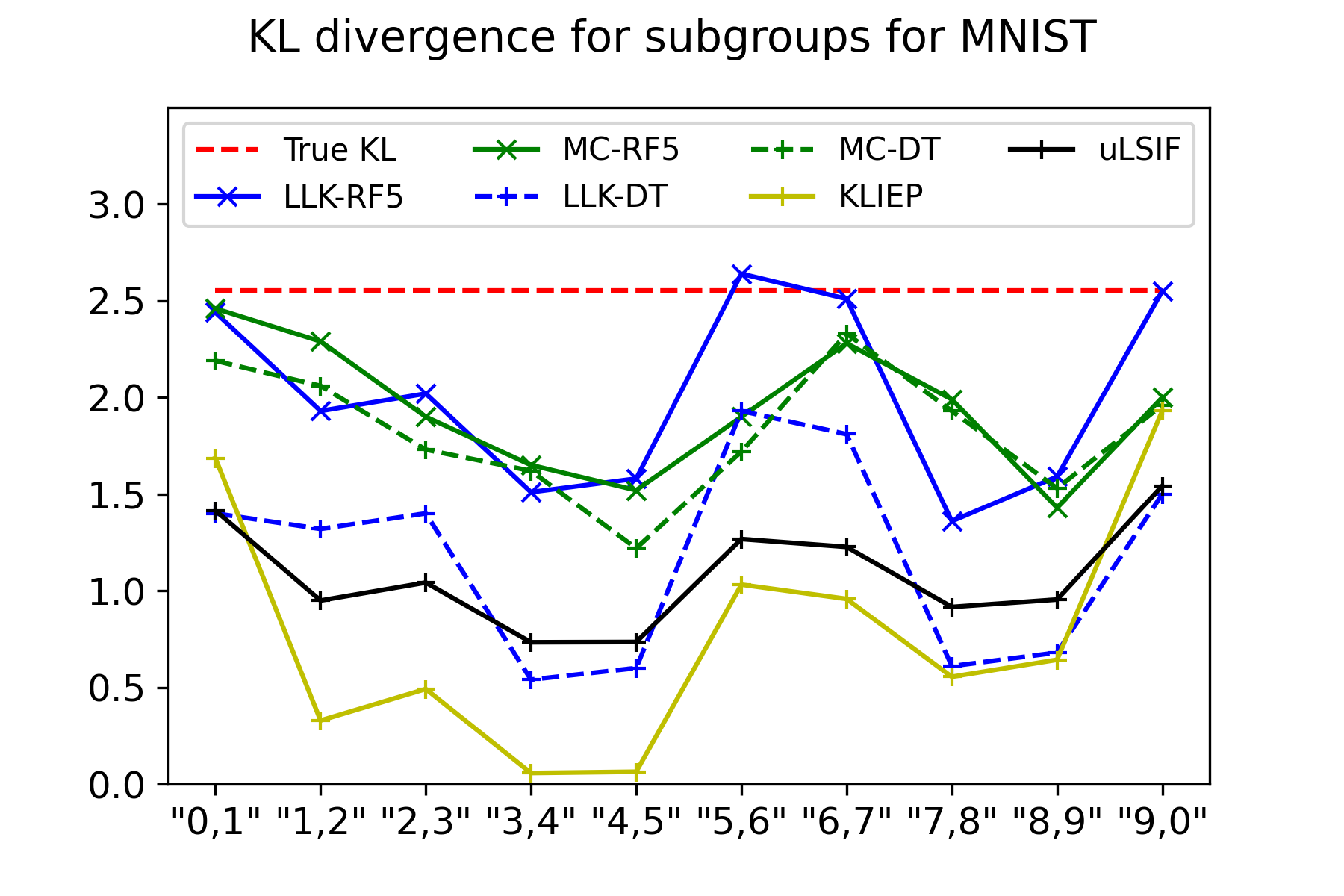}
   \caption{KL estimates for sub-populations, each sub-population is a pair of an odd and an even digit.}    \label{fig:mnist_subgroup}
\end{figure}

\eat{
Conclusion: When the feature family is relatively weak the multi-calibrated estimator is the best. In fact, it provides quite good estimates even when the feature set is decision trees of depth $1$. It requires many samples though, so when the classifier family becomes too strong it starts to over-fit, so when the depth is $10$ we actually see a drop in the estimated divergence as compared to depth $5$. The maxEnt (or LL-KLEIP) approach over fits later, so it provides the best results when depth is $10$. KLEIP is by far the weakest.  }

\section{Related work and discussion}
\label{sec:related}

Technically, our work builds on the notion of multi-calibrated partitions introduced in \cite{gopalan2021multicalibrated}, which in turn was inspired by the work of \cite{hkrr2018} on multi-calibration in supervised settings (also see \cite{kearns2018preventing,kgz}). \cite{gopalan2021multicalibrated} was motivated by completeness and soundness guarantees for sub-populations for the weights $w(x)$. They prove {\em sandwiching bounds} for $w(x)$ that are analogous to those shown in Corollary  \ref{cor:sandwich} for $\log(w(x))$, but bounds for $w(x)$ do not imply bounds for $\log(w(x))$ and vice versa.  While similar in motivation, technically the two works are quite different. Working with $\log(w(x))$ is cleaner, and allows us to connect the problem to KL estimation and Pythagorean theorems. There does not appear to be a similar connection to the Renyi divergence for $w(x)$.

There has been plenty of work on  density ratio estimation  \cite{sugiyama2012adensity, KLIEP, kanamori2010theoretical, sugiyamaBook} or equivalently importance weight estimation \cite{ cortes2008sample, cortes2010learning}. Kernel based approaches for estimating importance weights have also been proposed, starting with Kernel Mean Matching (KMM) introduced in \cite{huang2007correcting}, see also  \cite{cortes2008sample}, which generalize moment-matching methods \citep{qin1998inferences}. 
Point-wise accurate importance weight estimation is impossible in the worst case \cite{SBD}. This motivated the work of \cite{gopalan2021multicalibrated} which seeks to gives guarantees for a family of sets $\mC$ which represent sub-populations in the data.
There has been a lot of work on estimating KL and other divergences from random samples, see \cite{nguyen2007nonparametric, nguyen2010estimating, suzuki2008approximating, wang2005divergence, wang2009divergence}. The latter works also use a data-dependent partitioning scheme, but the details are rather different from ours. There is extensive work on estimating the KL-divergence under much stronger assumptions on the distributions \cite{hero2001alpha,hershey2007approximating,durrieu2012lower,NEURIPS2019_3147da8a}. 

Another line of relevant work is the use of boosting algorithms for distribution learning \cite{boostingBook}: algorithms such as \cite{KLIEP, dudik2007maximum} can be viewed as adaptations of the exponential-update rule of Adaboost \cite{freund1997decision, boostingBook} to density-ratio estimation. These algorithms ensure multi-accuracy for sub-populations in $\mC$. The algorithm of \cite{gopalan2021multicalibrated} can be viewed as adapting the Boosting by Branching Programs algorithm of \cite{mansour2002boosting, KearnsM99} to density-ratio estimation. In the supervised learning context, boosting by branching programs is known to have superior noise tolerance properties \cite{kalai2005boosting, KalaiMV08, LongS10}. 

Pythagorean theorems are studied in information theory under the subject of information geometry \cite{CTbook}. The term Pythagorean theorem is variously used to mean both equalities \cite{della1997inducing, boostingBook} or one-sided inequalities \cite{CTbook}.


\bibliography{references}

\newcommand{\etalchar}[1]{$^{#1}$}
\begin{thebibliography}{DPDPL97}

\bibitem[BFR{\etalchar{+}}13]{BatuFRSW13}
Tugkan Batu, Lance Fortnow, Ronitt Rubinfeld, Warren~D. Smith, and Patrick
  White.
\newblock Testing closeness of discrete distributions.
\newblock {\em J. {ACM}}, 60(1):4:1--4:25, 2013.

\bibitem[BLM01]{SBD2}
Shai Ben{-}David, Philip~M. Long, and Yishay Mansour.
\newblock Agnostic boosting.
\newblock In {\em Computational Learning Theory, 14th Annual Conference on
  Computational Learning Theory, {COLT} 2001}, volume 2111 of {\em Lecture
  Notes in Computer Science}, pages 507--516. Springer, 2001.

\bibitem[BU12]{SBD}
Shai Ben{-}David and Ruth Urner.
\newblock On the hardness of domain adaptation and the utility of unlabeled
  target samples.
\newblock In {\em Algorithmic Learning Theory - 23rd International Conference,
  {ALT}}, 2012.

\bibitem[CMM10]{cortes2010learning}
Corinna Cortes, Yishay Mansour, and Mehryar Mohri.
\newblock Learning bounds for importance weighting.
\newblock In {\em Advances in neural information processing systems}, pages
  442--450, 2010.

\bibitem[CMRR08]{cortes2008sample}
Corinna Cortes, Mehryar Mohri, Michael Riley, and Afshin Rostamizadeh.
\newblock Sample selection bias correction theory.
\newblock In {\em International conference on algorithmic learning theory},
  pages 38--53. Springer, 2008.

\bibitem[CT06]{CTbook}
Thomas~M. Cover and Joy~A. Thomas.
\newblock {\em Elements of information theory {(2.} ed.)}.
\newblock Wiley, 2006.

\bibitem[DPDPL97]{della1997inducing}
Stephen Della~Pietra, Vincent Della~Pietra, and John Lafferty.
\newblock Inducing features of random fields.
\newblock {\em IEEE transactions on pattern analysis and machine intelligence},
  19(4):380--393, 1997.

\bibitem[DPS04]{dudik2004performance}
Miroslav Dudik, Steven~J Phillips, and Robert~E Schapire.
\newblock Performance guarantees for regularized maximum entropy density
  estimation.
\newblock In {\em International Conference on Computational Learning Theory},
  pages 472--486. Springer, 2004.

\bibitem[DPS07]{dudik2007maximum}
Miroslav Dudik, Steven~J Phillips, and Robert~E Schapire.
\newblock Maximum entropy density estimation with generalized regularization
  and an application to species distribution modeling.
\newblock {\em Journal of Machine Learning Research}, 8(Jun):1217--1260, 2007.

\bibitem[DTK12]{durrieu2012lower}
J-L Durrieu, J-Ph Thiran, and Finnian Kelly.
\newblock Lower and upper bounds for approximation of the kullback-leibler
  divergence between gaussian mixture models.
\newblock In {\em 2012 IEEE International Conference on Acoustics, Speech and
  Signal Processing (ICASSP)}, pages 4833--4836. Ieee, 2012.

\bibitem[FS97]{freund1997decision}
Yoav Freund and Robert~E Schapire.
\newblock A decision-theoretic generalization of on-line learning and an
  application to boosting.
\newblock {\em Journal of computer and system sciences}, 55(1):119--139, 1997.

\bibitem[GRSW21]{gopalan2021multicalibrated}
Parikshit Gopalan, Omer Reingold, Vatsal Sharan, and Udi Wieder.
\newblock Multicalibrated partitions for importance weights.
\newblock {\em arXiv preprint arXiv:2103.05853}, 2021.

\bibitem[HGB{\etalchar{+}}07]{huang2007correcting}
Jiayuan Huang, Arthur Gretton, Karsten Borgwardt, Bernhard Sch{\"o}lkopf, and
  Alex~J Smola.
\newblock Correcting sample selection bias by unlabeled data.
\newblock In {\em Advances in neural information processing systems}, pages
  601--608, 2007.

\bibitem[HKRR18]{hkrr2018}
{\'{U}}rsula H{\'{e}}bert{-}Johnson, Michael~P. Kim, Omer Reingold, and Guy~N.
  Rothblum.
\newblock Multicalibration: Calibration for the (computationally-identifiable)
  masses.
\newblock In {\em Proceedings of the 35th International Conference on Machine
  Learning, {ICML}}, 2018.

\bibitem[HMMG01]{hero2001alpha}
Alfred~O Hero, Bing Ma, Olivier Michel, and John Gorman.
\newblock Alpha-divergence for classification, indexing and retrieval.
\newblock In {\em University of Michigan}. Citeseer, 2001.

\bibitem[HO07]{hershey2007approximating}
John~R Hershey and Peder~A Olsen.
\newblock Approximating the kullback leibler divergence between gaussian
  mixture models.
\newblock In {\em 2007 IEEE International Conference on Acoustics, Speech and
  Signal Processing-ICASSP'07}, volume~4, pages IV--317. IEEE, 2007.

\bibitem[Jay57]{jaynes1957information}
Edwin~T Jaynes.
\newblock Information theory and statistical mechanics.
\newblock {\em Physical review}, 106(4):620, 1957.

\bibitem[KGZ19]{kgz}
Michael~P. Kim, Amirata Ghorbani, and James Zou.
\newblock Multiaccuracy: Black-box post-processing for fairness in
  classification.
\newblock In {\em Proceedings of the 2019 AAAI/ACM Conference on AI, Ethics,
  and Society}, pages 247--254, 2019.

\bibitem[KHS09]{kanamori2009least}
Takafumi Kanamori, Shohei Hido, and Masashi Sugiyama.
\newblock A least-squares approach to direct importance estimation.
\newblock {\em The Journal of Machine Learning Research}, 10:1391--1445, 2009.

\bibitem[KM99]{KearnsM99}
Michael~J. Kearns and Yishay Mansour.
\newblock On the boosting ability of top-down decision tree learning
  algorithms.
\newblock {\em J. Comput. Syst. Sci.}, 58(1):109--128, 1999.

\bibitem[KMV08]{KalaiMV08}
Adam~Tauman Kalai, Yishay Mansour, and Elad Verbin.
\newblock On agnostic boosting and parity learning.
\newblock In {\em Proceedings of the 40th Annual {ACM} Symposium on Theory of
  Computing, Victoria, British Columbia, Canada, May 17-20, 2008}, pages
  629--638. {ACM}, 2008.

\bibitem[KNRW18]{kearns2018preventing}
Michael Kearns, Seth Neel, Aaron Roth, and Zhiwei~Steven Wu.
\newblock Preventing fairness gerrymandering: Auditing and learning for
  subgroup fairness.
\newblock In {\em International Conference on Machine Learning}, pages
  2564--2572, 2018.

\bibitem[KS05]{kalai2005boosting}
Adam~Tauman Kalai and Rocco~A Servedio.
\newblock Boosting in the presence of noise.
\newblock {\em Journal of Computer and System Sciences}, 71(3):266--290, 2005.

\bibitem[KSS10]{kanamori2010theoretical}
Takafumi Kanamori, Taiji Suzuki, and Masashi Sugiyama.
\newblock Theoretical analysis of density ratio estimation.
\newblock {\em IEICE transactions on fundamentals of electronics,
  communications and computer sciences}, 93(4):787--798, 2010.

\bibitem[KT03]{KazamaT}
Jun'ichi Kazama and Jun'ichi Tsujii.
\newblock Evaluation and extension of maximum entropy models with inequality
  constraints.
\newblock In {\em EMNLP '03: Proceedings of the 2003 conference on Empirical
  methods in natural language processing}, pages 137--144, 01 2003.

\bibitem[LC10]{lecun-mnisthandwrittendigit-2010}
Yann LeCun and Corinna Cortes.
\newblock {MNIST} handwritten digit database.
\newblock 2010.

\bibitem[LS10]{LongS10}
Philip~M. Long and Rocco~A. Servedio.
\newblock Random classification noise defeats all convex potential boosters.
\newblock {\em Mach. Learn.}, 78(3):287--304, 2010.

\bibitem[MM02]{mansour2002boosting}
Yishay Mansour and David McAllester.
\newblock Boosting using branching programs.
\newblock {\em Journal of Computer and System Sciences}, 64(1):103--112, 2002.

\bibitem[NWJ07]{nguyen2007nonparametric}
XuanLong Nguyen, Martin~J Wainwright, and Michael~I Jordan.
\newblock Nonparametric estimation of the likelihood ratio and divergence
  functionals.
\newblock In {\em 2007 IEEE International Symposium on Information Theory},
  pages 2016--2020. IEEE, 2007.

\bibitem[NWJ10]{nguyen2010estimating}
XuanLong Nguyen, Martin~J Wainwright, and Michael~I Jordan.
\newblock Estimating divergence functionals and the likelihood ratio by convex
  risk minimization.
\newblock {\em IEEE Transactions on Information Theory}, 56(11):5847--5861,
  2010.

\bibitem[Qin98]{qin1998inferences}
Jing Qin.
\newblock Inferences for case-control and semiparametric two-sample density
  ratio models.
\newblock {\em Biometrika}, 85(3):619--630, 1998.

\bibitem[RBD{\etalchar{+}}19]{NEURIPS2019_3147da8a}
Paul Rubenstein, Olivier Bousquet, Josip Djolonga, Carlos Riquelme, and Ilya~O
  Tolstikhin.
\newblock Practical and consistent estimation of f-divergences.
\newblock In {\em Advances in Neural Information Processing Systems},
  volume~32. Curran Associates, Inc., 2019.

\bibitem[SF12]{boostingBook}
Robert~E. Schapire and Yoav Freund.
\newblock {\em Boosting: Foundations and Algorithms}.
\newblock MIT Press, 2012.

\bibitem[SNK{\etalchar{+}}08]{KLIEP}
Masashi Sugiyama, Shinichi Nakajima, Hisashi Kashima, Paul~Von Bunau, and
  Motoaki Kawanabe.
\newblock Direct importance estimation for covariate shift adaptation.
\newblock {\em Annals of the Institute of Statistical Mathematics}, 2008.

\bibitem[SSK12a]{sugiyamaBook}
Masashi Sugiyama, Taiji Suzuki, and Takafumi Kanamori.
\newblock {\em Density Ratio Estimation in Machine Learning}.
\newblock Cambridge University Press, 2012.

\bibitem[SSK12b]{sugiyama2012adensity}
Masashi Sugiyama, Taiji Suzuki, and Takafumi Kanamori.
\newblock Density-ratio matching under the bregman divergence: a unified
  framework of density-ratio estimation.
\newblock {\em Annals of the Institute of Statistical Mathematics},
  64(5):1009--1044, 2012.

\bibitem[SSSK08]{suzuki2008approximating}
Taiji Suzuki, Masashi Sugiyama, Jun Sese, and Takafumi Kanamori.
\newblock Approximating mutual information by maximum likelihood density ratio
  estimation.
\newblock In {\em New challenges for feature selection in data mining and
  knowledge discovery}, pages 5--20, 2008.

\bibitem[Val11]{Valiant11}
Paul Valiant.
\newblock Testing symmetric properties of distributions.
\newblock {\em {SIAM} J. Comput.}, 40(6):1927--1968, 2011.

\bibitem[WKV05]{wang2005divergence}
Qing Wang, Sanjeev~R Kulkarni, and Sergio Verd{\'u}.
\newblock Divergence estimation of continuous distributions based on
  data-dependent partitions.
\newblock {\em IEEE Transactions on Information Theory}, 51(9):3064--3074,
  2005.

\bibitem[WKV09]{wang2009divergence}
Qing Wang, Sanjeev~R Kulkarni, and Sergio Verd{\'u}.
\newblock Divergence estimation for multidimensional densities via $ k
  $-nearest-neighbor distances.
\newblock {\em IEEE Transactions on Information Theory}, 55(5):2392--2405,
  2009.

\end{thebibliography}
\bibliographystyle{alpha}

\newpage
\appendix
\section{Some Additional Results and Proofs}
\label{app:mcab}

\subsection{Proofs from Section \ref{sec:intro}}

\begin{proof}[Proof of Lemma \ref{lem:chain-rule}]
We have
\begin{align*}
    \KL{R}{P} &= \E_R\logf{R(x)}{P(x)} \\
    &= R(C)\E_{R|_C}\logf{R(x)}{P(x)} + R(\bar{C})\E_{R|_{\bar{C}}} \logf{R(x)}{P(x)}\\
    & = R(C)\E_{R|_C}\logf{R|_C(x)R(C)}{P|_C(x)P(C)} + R(\bar{C})\E_{R|_{\bar{C}}}\logf{R|_{\bar{C}}(x)R(\bar{C})}{P|_{\bar{C}}(x)P(\bar{C})} \\
    &= R(C)\E_{R|_C}\logf{R|_C(x)}{P|_C(x)} + 
R(\bar{C})\E_{R|_{\bar{C}}}\logf{R|_{\bar{C}}(x)}{P|_{\bar{C}}(x)}\\
& + R(C)\E_{R|_C}\logf{R(C)}{P(C)} +  R(\bar{C})\logf{R(\bar{C})}{P(\bar{C})}\\
&= R(C)\KL{R|_C}{P|_C} + R(\bar{C})\KL{R|_{\bar{C}}}{P|_{\bar{C}}} + \d(R(C), P(C)).
\end{align*}

\end{proof}

The following Lemma proves that distinguishing whether the KL divergence between two distributions $P$ and $R$ is 0, or almost as large as it could be (notice that if $R(x)/P(x)\le t\; \forall\; x \in \X$ then $\KL{R}{P} \le \log(t)$) needs a sample complexity polynomial in the size of the domain. The proof follows by a simple application of the birthday paradox, we note that it is possible to prove stronger lower bounds using better constructions \cite{BatuFRSW13,Valiant11} but we include the following result below for completeness.

\begin{lemma}\label{lem:impossibility}
\eat{
Let $P$ and $R$ be two distributions supported on some domain $\X$ of size $|\X|$. Say for any $t>0$ we are guaranteed that $R(x)/P(x)\le t\; \forall\; x \in \X$.  Then any algorithm for distinguishing whether $\KL{R}{P}=0$ or $\KL{R}{P} \ge \log(t)/10$ with success probability at least $2/3$ from i.i.d. samples from $P$ and $R$ requires at least $\Omega(\sqrt{|\X|})$ samples. }
Given any $t>0$ and a domain $\X$ of size $|\X|$, 
an  algorithm that, given two distributions $P,R$ over $X$ such that $R(x)/P(x) \le t$  can distinguish whether $\KL{R}{P}=0$ or $\KL{R}{P} \ge \log(t)/10$ with success probability at least $2/3$ from i.i.d. samples from $P$ and $R$, requires at least $\Omega(\sqrt{|\X|})$ samples.  
\end{lemma}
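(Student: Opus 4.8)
The plan is to prove the lower bound by reduction to a two-point (Le Cam) hypothesis-testing problem, exhibiting a pair of hypotheses that are information-theoretically indistinguishable from $o(\sqrt{|\X|})$ samples yet fall on opposite sides of the KL threshold. Write $n = |\X|$ and fix $P$ to be the uniform distribution on $\X$. Under the null hypothesis $H_0$ I take $R = P$, so that $\KL{R}{P} = 0$ and the ratio $R(x)/P(x) \equiv 1 \le t$. Under the alternative $H_1$ I draw a uniformly random subset $S \subseteq \X$ of size $n/t$ and let $R$ be uniform on $S$, i.e.\ $R(x) = t/n$ for $x \in S$ and $R(x) = 0$ otherwise. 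Then $R(x)/P(x) \in \{0, t\}$, so the promise $R(x)/P(x) \le t$ holds, while a direct computation gives $\KL{R}{P} = \sum_{x \in S} (t/n)\log t = \log t \ge \log(t)/10$. Thus a successful distinguisher would separate $H_0$ from $H_1$, and it suffices to lower bound the number of samples any such distinguisher needs.

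The heart of the argument is the birthday paradox, applied to the samples. Suppose the algorithm sees $k$ i.i.d.\ samples from each of $P$ and $R$. The key structural claim is that the collision pattern of the observed samples --- the partition recording which samples took equal values --- is a sufficient statistic, and that \emph{conditioned on all samples being distinct} the labelled sample sequences have exactly the same distribution under $H_0$ and under $H_1$. This follows from the permutation symmetry of the construction over $\X$ together with the fact that, marginally over the random choice of $S$, a set of distinct $R$-values in $H_1$ is a uniformly random subset of $\X$, just as in $H_0$. Consequently the total variation distance between the two induced sample distributions is bounded by the probability of witnessing a collision. Here one checks that $P$--$P$ collisions and $P$--$R$ cross-collisions occur at the same rate $O(k^2/n)$ under both hypotheses and therefore cannot help; the only discriminating event is a collision among the $R$-samples, whose probability under $H_1$ is $O(k^2 t/n)$ since $R$ is supported on only $n/t$ points.

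To finish, I would invoke Le Cam: a distinguisher with success probability $\ge 2/3$ forces the total variation distance between the two sample distributions to be $\ge 1/3$, and the collision bound above shows this requires $k = \Omega(\sqrt{n/t})$, i.e.\ $\Omega(\sqrt{|\X|})$ samples when $t$ is treated as a constant. The dependence on $t$ is exactly where this simple construction is lossy, which is consistent with the remark that sharper bounds require the more intricate constructions of \cite{BatuFRSW13, Valiant11}.

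The step I expect to be the main obstacle is the sufficient-statistic argument: making precise that, because $S$ is unknown and uniformly random, the algorithm gains no information until it observes a collision, and that the only collision event distinguishing the two cases is an $R$--$R$ coincidence. This requires carefully coupling the two sample distributions conditioned on the ``all distinct'' pattern and verifying that the collision probabilities of the non-discriminating events genuinely match across hypotheses, so that the full TV distance collapses to the difference in $R$--$R$ collision probabilities. Once this coupling is in place, the remaining computation is a routine birthday-paradox estimate.
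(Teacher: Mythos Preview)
Your argument is correct and, like the paper's, rests on a birthday-paradox indistinguishability claim; but the constructions differ in a way that costs you a $\sqrt{t}$ factor. You take $P$ uniform on all of $\X$ and $R$ uniform on a random subset of size $|\X|/t$, so the support of $R$ shrinks with $t$ and the discriminating event (an $R$--$R$ collision) has probability $\Theta(k^2 t/|\X|)$; Le Cam then yields only $k = \Omega(\sqrt{|\X|/t})$. The paper instead first proves the lower bound with \emph{no} ratio constraint, taking both $P$ and $R$ uniform on (possibly different) random halves of $\X$, which gives $\Omega(\sqrt{|\X|})$ outright; it then reduces the $t$-bounded problem to this one by mixing with the uniform distribution, setting $P' = (2/t)U + (1-2/t)P$ and $R' = (2/t)U + (1-2/t)R$, which enforces $R'(x)/P'(x) \le t$ while preserving $\KL{R'}{P'} \ge \log(t)/10$ and allowing sample-level simulation. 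This two-step route buys a bound whose hidden constant is independent of $t$, which is what the lemma as stated asks for.

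Your remark that removing the $t$-dependence ``requires the more intricate constructions of \cite{BatuFRSW13, Valiant11}'' is therefore not quite right: the paper's own elementary mixture-with-uniform trick already eliminates the $\sqrt{t}$ loss. Your sufficient-statistic/coupling concern, on the other hand, is not a real obstacle: by the permutation symmetry of your construction, the $R$-samples marginalized over $S$ and conditioned on being distinct are a uniformly random distinct $k$-tuple in $\X$, independent of the $P$-samples, exactly as under $H_0$; so the conditional distributions genuinely coincide and the TV bound reduces to the $R$--$R$ collision probability as you claimed.
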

\begin{proof}
We first consider the case when $P$ and $R$ are both uniform distributions supported on half the domain (and the supports of both $P$ and $R$ are unknown to the algorithm). Note that by the birthday paradox, we not not expect to see any repetitions in $\sqrt{\X}/10$ samples drawn from a uniform distribution over a support of size $|\X|/2$, with probability $9/10$. Therefore, with $\sqrt{\X}/20$ samples drawn from $P$ and $R$, with probability $9/10$ we do not expect to any repetitions in the samples in either the case when $R=P$, or $R$ is a uniform distribution over a different support. Therefore no algorithm can distinguish between the case when $R=P$ or when the support of $R$ is drawn randomly and independently of $P$ with success probability more than $1/10$ given $O(\sqrt{\X})$ samples.

We can now leverage this lower bound to show a lower bound for any $t$ and some pair of distributions $R'$ and $P'$ such that $R'(x)/P'(x)\le t\; \forall\; x \in \X$. Let $U$ be the  uniform distribution over $\X$. For $P$ and $R$ as defined in the previous paragraph, let $P'=(2/t)U+(1-2/t)P$ and $R'=(2/t)U+(1-2/t)R$. Notice that in this case $R'(x)/P'(x)\in \{1,t-1\}\; \forall\; x \in \X \implies R'(x)/P'(x) \le t\; \forall\; x \in \X$. We note that if the support  of $R$ is chosen randomly and independently of the support of $P$, then by a Chernoff bound with probability $9/10$ the overlap in their supports is at most $|\X|/3$, which implies that $\KL{R'}{P'}\ge \log(t)/10$ with probability $9/10$. We now observe that if there exists an algorithm to distinguish whether $\KL{R'}{P'}=0$ or $\KL{R'}{P'} \ge \log(t)/10$ with success probability at least $p$ with $O(\sqrt{\X})$ samples, then it can be used to distinguish whether $R=P$ or $R$ is a uniform distribution over a different support as in the previous setup with success probability at least $(p-1/10)$ with $O(\sqrt{\X})$ samples. To verify this, observe that it is easy to generate $m$ samples from $P'$ and $R'$ given $m$ samples from $P$ and $R$. Therefore, by the lower bound in the previous paragraph, no algorithm can distinguish whether $\KL{R'}{P'}=0$ or $\KL{R'}{P'} \ge \log(t)/10$ with success probability at least $2/3$ with $O(\sqrt{\X})$ samples.

\end{proof}

\subsection{Claims and Proofs from Section \ref{sec:mcab}}

\begin{lemma}
\label{lem:exp-close}
    $Q$ is $\alpha$-\mact for $(R, \mC)$. 
\end{lemma}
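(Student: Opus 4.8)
The plan is to verify the single defining inequality of $\alpha$-multiaccuracy, namely $\abs{Q(C) - R(C)} \leq \alpha$ for every $C \in \mC$, working directly from the structure of the $(R,\mS)$ reweighting. The key tool is Lemma \ref{lem:partition}, which supplies two facts about $Q$: it induces the same distribution on $\mS$ as $R$ (so $Q(S_i) = R(S_i)$ for all $i$), and its conditional distributions agree with those of $P$ (so $Q|_{S_i} = P|_{S_i}$).

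First I would decompose both $Q(C)$ and $R(C)$ over the partition $\mS$. Writing $Q(C) = \sum_i Q(S_i)\, Q|_{S_i}(C)$ and substituting the two facts above gives $Q(C) = \sum_i R(S_i)\, P|_{S_i}(C)$, whereas $R(C) = \sum_i R(S_i)\, R|_{S_i}(C)$. Subtracting, the shared weights $R(S_i)$ factor out and I obtain
\[ Q(C) - R(C) = \sum_{i \in [m]} R(S_i)\big(P|_{S_i}(C) - R|_{S_i}(C)\big). \]

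Next I would take absolute values and apply the triangle inequality, pulling the nonnegative weights $R(S_i)$ inside, to get
\[ \abs{Q(C) - R(C)} \leq \sum_{i \in [m]} R(S_i)\,\abs{P|_{S_i}(C) - R|_{S_i}(C)} = \E_{\s \sim R}\big[\abs{R|_{\s}(C) - P|_{\s}(C)}\big]. \]
The right-hand side is exactly the quantity in Definition \ref{def:mcab}, so since $\mS$ is assumed $\alpha$-\mcbd\ for $(P,R,\mC)$, it is bounded by $\alpha$, completing the argument.

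There is no genuine obstacle here: once Lemma \ref{lem:partition} is invoked the whole claim is a one-line computation. The only point deserving a moment's care is recognizing that the expectation $\E_{\s \sim R}$ in the multicalibration definition is precisely the $R(S_i)$-weighted sum that emerges from the decomposition; this is what makes the weaker average-case form of multicalibration suffice, rather than needing the stronger pointwise bound $\abs{R|_{S_i}(C) - P|_{S_i}(C)} \leq \alpha$ on each state.
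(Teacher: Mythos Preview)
Your proposal is correct and essentially identical to the paper's proof: both decompose $Q(C)$ and $R(C)$ over the partition via Lemma \ref{lem:partition}, rewrite the difference as $\E_{\s\sim R}[P|_\s(C)-R|_\s(C)]$, apply the triangle inequality, and invoke Definition \ref{def:mcab}. Your observation that only the average-case multicalibration is needed is exactly the point.
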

\begin{proof}
Using items (1) and (2) of Lemma \ref{lem:partition}, we can write
\begin{align*}
    Q(C) = \sum_{i \in [m]} R(S_i)P|_{S_i}(C) = \E_{\s \sim R}[P|_{\s}(C)], \    R(C) = \E_{\s \sim R}[R|_{\s}(C)]
\end{align*} 
Hence 
\begin{align*}
    \abs{Q(C) - R(C)} &= \abs{\E_{\s \sim R}[P|_{\s}(C)- R|_\s(C)]} \leq \E_{\s \sim R}\lt[\abs{P|_{\s}(C) - R|_\s(C)}\rt] \leq \alpha.
\end{align*}
\end{proof}

\section{Additional Experiments} \label{sec:app-experiments}
We estimated $\KL{R}{P}$, when $R,P$ are mixtures of $k$ Gaussians. In Table~\ref{tbl:kl-mixture-2} we take $N=500,000$ samples of two dimensional Gaussians. \ulsif is not in the table since it creates a $N\times N$ kernel matrix and hence does not scale to this number of samples. In Table~\ref{tbl:kl-mixture-3} we take $5$ dimensional Gaussians. In all these tables we see \MC providing the most accurate estimate, especially when $k$ increases. Surprisingly, \MC estimates the divergence quite well even when the set of base classifiers is just threshold over basic featers (MC-DT1). 
\begin{table}
		\caption{ KL estimations for mixture of $k$ Gaussians with $N=500000$}\label{tbl:kl-mixture-2}		\centering
		\begin{tabular}{ c c c c c c c }
			\hline
			$k$ & KL & LLK-RF5 & MC-RF5 & LLK-DT1 & MC-DT1 & KLIEP \\
			\hline
			\hline
			5  & 9.02  & 7.66   & 7.79   & 2.43  & 7.54  & 4.95 \\
			10  & 9.02  & 6.09  & 7.49    & 1.23  & 7.34  & 4.88 \\
			15  & 9.02    & 3.69   & 7.23  & 0.40  & 6.61  & 5.07\\
			\hline
		\end{tabular}
	\end{table}
	
\begin{table}
		\caption{ KL estimations for mixture of $k$ Gaussians with $d=5$, $N=500000$}\label{tbl:kl-mixture-3}
		\centering
		\begin{tabular}{ c c c c  c c c }
			\hline
			$k$ & KL & LLK-RF5 & MC-RF5 & LLK-DT1 & MC-DT1 & KLIEP \\
			\hline
			\hline
			5  & 8.11  & 6.76   &  6.30  & 1.94  & 5.50  & 4.16 \\
			10  & 8.11  & 6.12  & 6.05    & 0.80  & 4.74 & 3.34 \\
			15  & 8.11    & 4.35   & 5.80  & 0.39  & 3.59 & 3.59\\
			\hline
		\end{tabular}
	\end{table}

	\begin{table}
		\caption{ Standard deviations across the $k$ subgroups for KL estimations for mixture of $k$ Gaussians with $N=30000$, averaged over 5 trials} \label{tbl:mixture-sd}
		\centering
		\begin{tabular}{ c  c c c c c c c c }
			\hline
			$k$ & LLK-RF5 & MC-RF5 & LLK-DT1 & MC-DT1 & KLIEP & ULSIF\\
			\hline
			\hline
			5  & $0.42 \pm 0.15$  & $0.41 \pm 0.13$   & $1.94 \pm 0.65 $  & $0.36 \pm 0.13$  & $0.35 \pm 0.24$ & $0.28 \pm 0.15$ \\
			10   & $2.27 \pm 0.24 $  & $0.57 \pm 0.21$    &  $1.34 \pm 0.24$ & $0.58 \pm 0.27$ & $0.49 \pm 0.16$ & $0.51 \pm 0.15$\\
			15     & $2.54 \pm 0.11$  & $0.60 \pm 0.19$   & $0.93 \pm 0.18$  & $1.16 \pm 0.23$ & $0.92 \pm 0.30$ & $0.74 \pm 0.25$\\
			\hline
		\end{tabular}
	\end{table}
	\begin{figure}
    \centering
    \includegraphics[width=0.5\textwidth]{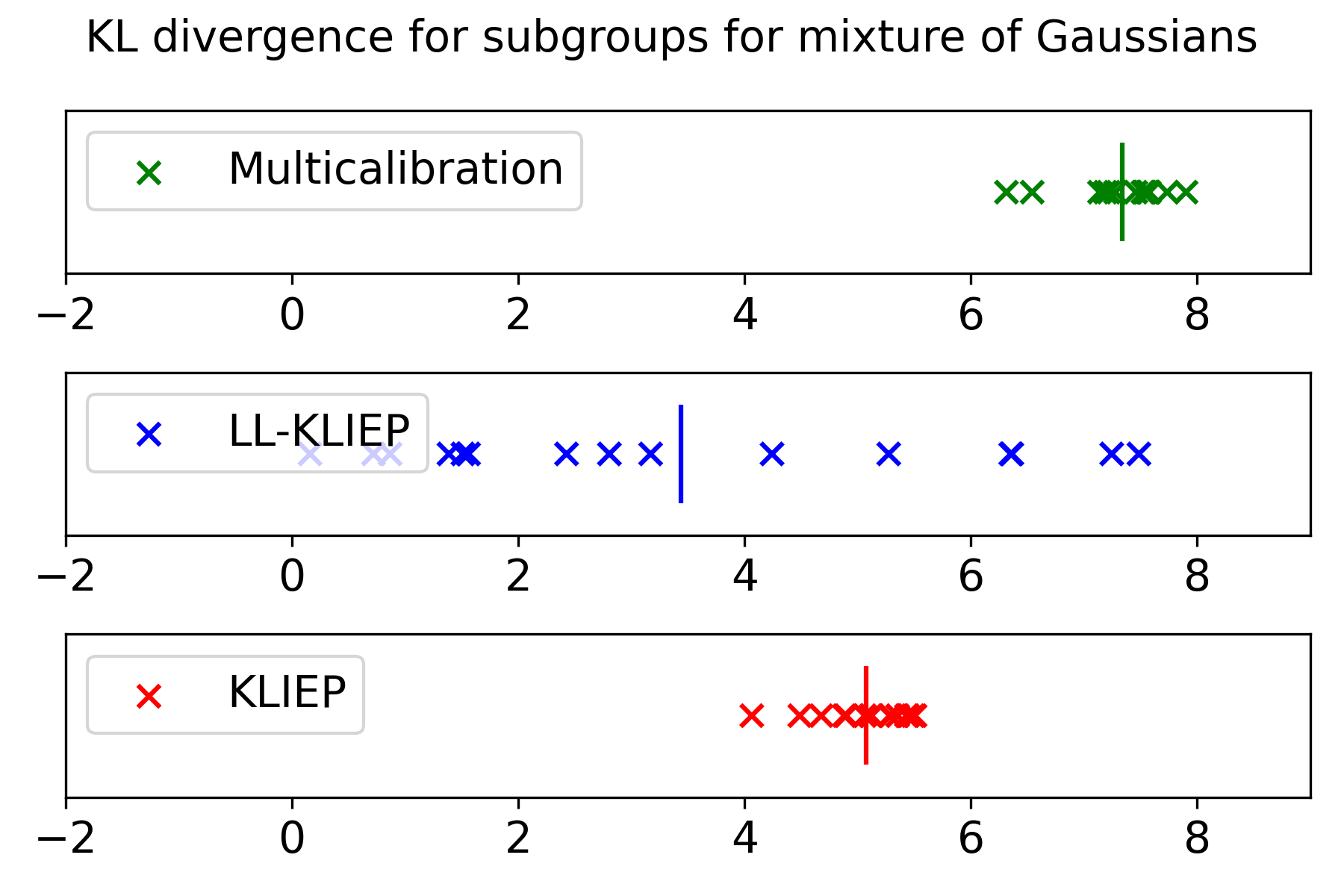}
  \caption{KL estimations for the $15$ subgroups for mixture of $15$ Gaussians with $N=500000$}    \label{fig:mixture-sub-2}
\end{figure}

\paragraph{Computation time and resources.} All our experiments are run on a standard laptop computer with 8 GB RAM. Our Python implementations of \MC\ and\ME\ have not been optimized at all for runtime efficiency. Still, we find the \MC\ algorithm to be reasonably efficient. To provide some ballpark numbers for the runtimes, MC-RF5 on the mixture of Gaussians experiment with $d=2,k=15, N=30000$ takes around 10 seconds to fit the data. The \ME\ implementation is slower, and takes around 100 seconds to fit this data. The MATLAB implementations of KLIEP and uLSIF take around 1 second to fit this data.

\paragraph{Choice of hyperparameters.} The hyperparameters in the \MC\ algorithm are the choice of classifier family, the width of the branching program, and the advantage below which we do not split a node. The choice of the classifier family is mentioned in each of the experiments. We set the width of the program to be 60, and the advantage to be $0.02$ for all the experiments. The hyperparameters for the LL-KLIEP algorithm are similar: the classifier family (as above, mentioned in the experiments), the advantage below which the program terminates (set to be $0.02$ as before), and the learning rate (set to be $0.02$). The hyperparameters for the KLIEP and uLSIF algorithms (width of the kernel and regularization) are optimized using the automatic hyperparameter tuning and cross-validation routines provided in the code.

\end{document}